\def\eqref#1{equation~\ref{#1}}
\def\1{\bm{1}}
\def\rc{{\textnormal{c}}}
\def\rf{{\textnormal{f}}}
\DeclareMathAlphabet{\mathsfit}{\encodingdefault}{\sfdefault}{m}{sl}
\SetMathAlphabet{\mathsfit}{bold}{\encodingdefault}{\sfdefault}{bx}{n}
\setlist{leftmargin=5.5mm}
\renewcommand{\eqref}[1]{(\ref{#1})}
\newcommand*\colourcheck[1]{%
  \expandafter\newcommand\csname #1check\endcsname{\textcolor{#1}{\ding{52}}}%
}
\newcommand*\colourxcheck[1]{%
  \expandafter\newcommand\csname #1xcheck\endcsname{\textcolor{#1}{\ding{55}}}%
}
\newcommand{\revs}[1]{{\color{black} #1}}
\title{On the Convergence Rates of Federated Q-Learning across Heterogeneous Environments}
\author{%
\name Leo (Muxing) Wang 
\email wang.muxin@northeastern.edu\\
\addr Northeastern University
\AND 
Pengkun Yang  
\email yangpengkun@tsinghua.edu.cn\\
\addr Tsinghua University 
\AND
Lili Su 
\email l.su@northeastern.edu\\
\addr Northeastern University
}
\begin{document}

\maketitle

\begin{abstract}
Large-scale multi-agent systems are often deployed across wide geographic areas, where agents interact with heterogeneous environments. 
There is an emerging interest in understanding the role of heterogeneity in the performance of the federated versions of classic reinforcement learning algorithms. In this paper, we study synchronous federated Q-learning, which aims to learn an optimal Q-function by having $K$ agents average their local Q-estimates per $E$ iterations.   
We provide a characterization of the error evolution, which decays to zero as the number of iterations $T$ increases.  
We show that when $K(E-1)$ is below a certain threshold, similar to the homogeneous environment settings, there is a linear speed-up concerning $K$.  
In sharp contrast, when $K(E-1)$ is above the threshold, heterogeneous environments lead to significant performance degradation. In particular, as $E$ increases, the convergence rate deteriorates. 
The slow convergence of having $E>1$ turns out to be fundamental rather than an artifact of our analysis. 
We prove that, 
for a wide range of stepsizes, the $\ell_{\infty}$ norm of the error cannot decay faster than { $\Theta_R \pth{E/{((1-\gamma)T)}}$, where $\Theta_R$ only hides numerical constants and the specific choice of reward values}.   
In addition, our experiments demonstrate that the convergence exhibits an interesting two-phase phenomenon. For any given stepsize, there is a sharp phase transition of the convergence: the error decays rapidly in the beginning yet later bounces up and stabilizes. 
\end{abstract}
\section{Introduction}
\label{sec:intro}
Advancements in unmanned capabilities are rapidly transforming industries and national security by enabling fast-paced and versatile operations across domains such as advanced manufacturing \citep{park2019reinforcement}, autonomous driving \citep{kiran2021deep}, and battlefields \citep{mohlenhof2021reinforcement}. 
Reinforcement learning (RL) -- a cornerstone for unmanned capabilities -- is a powerful machine learning method that aims to enable an agent to learn an optimal policy via interacting with its operating environment to solve sequential decision-making problems \citep{bertsekas1996neuro,bertsekas2019reinforcement}. 
However, the ever-increasing complexity of the environment results in a high-dimensional state-action space, often imposing overwhelmingly high sample collection requirements on individual agents.  
This limited-data challenge becomes a significant hurdle that must be addressed to realize the potential of reinforcement learning.

In this paper, we study reinforcement learning within a federated learning framework (also known as Federated Reinforcement Learning \citep{Qi_2021,jin_federated_2022,woo_blessing_2023}), wherein multiple agents independently collect samples and collaboratively train a common policy under the orchestration of a parameter server without disclosing the local data trajectories.   A simple illustration can be found in Fig.\,\ref{fig: system setup}. 
When the environments of all agents are homogeneous, it has been shown that the federated version of classic reinforcement learning algorithms 
can significantly alleviate the data collection burden on individual agents \citep{woo_blessing_2023,khodadadian2022federated} --  error bounds derived therein exhibit a linear speedup in terms of the number of agents. 
Moreover, by tuning the synchronization period 
$E$ (i.e., the number of iterations between agent synchronization), the communication cost can be significantly reduced compared with $E=1$ 
\begin{wrapfigure}[13]{r}{0.5\textwidth}  
\centering
\resizebox{\linewidth}{!}{
\includegraphics[width=0.9\linewidth]{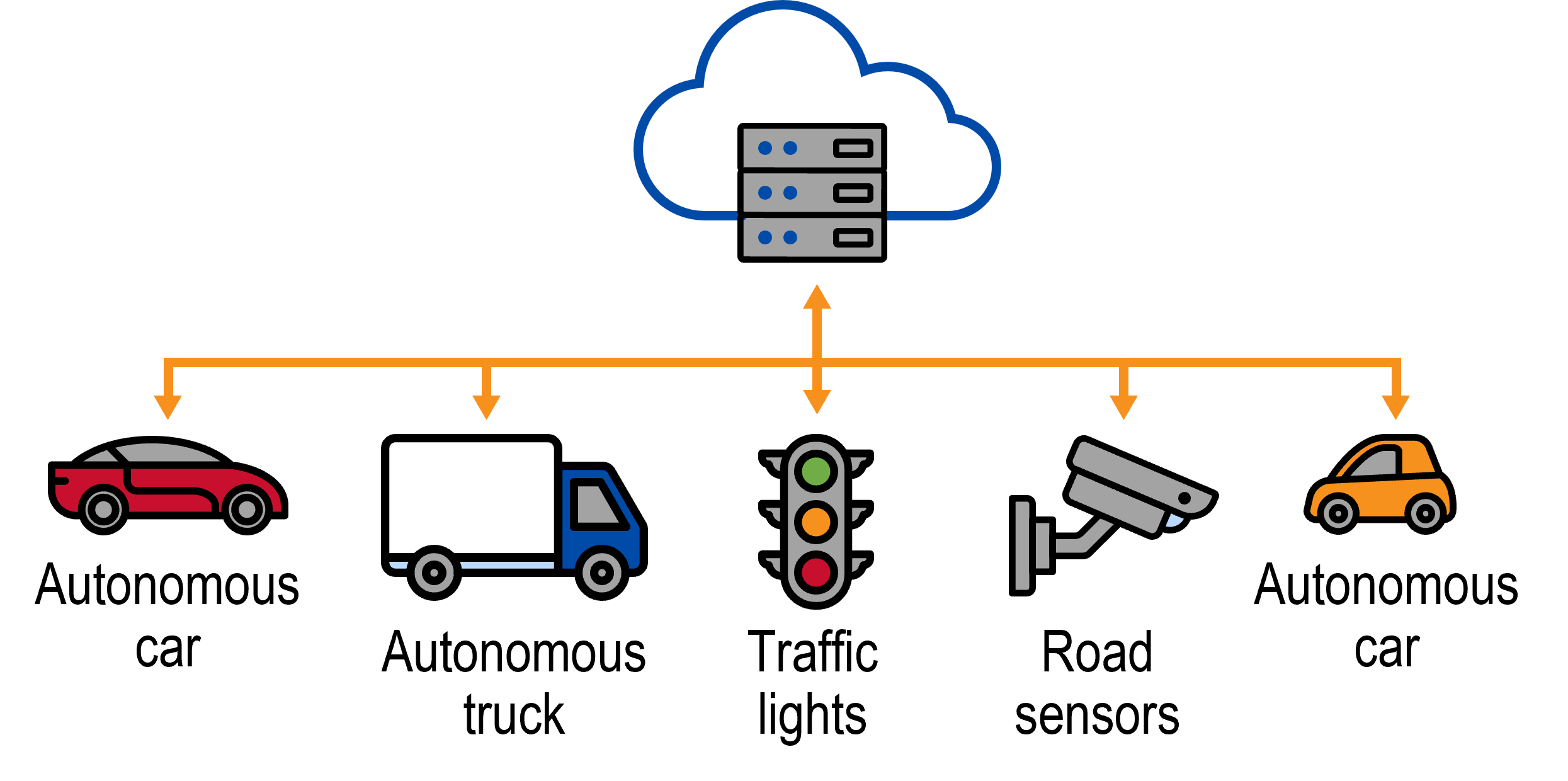} }
\caption{An illustration of a federated learning system. }
\label{fig: system setup}
\end{wrapfigure} 
yet without significant performance degradation.   
However, many large-scale multi-agent systems are often deployed across wide geographic areas, resulting in agents interacting with heterogeneous environments. For instance, connected and autonomous vehicles (CAVs) operating in various regions of a metropolitan area encounter diverse conditions such as varying traffic patterns, road infrastructure, and local regulations. 
Intuitively, the environmental heterogeneity may lead to misaligned learning signals across agents, potentially hinder the convergence, and degrade the generalization performance of the learned policies.  Hence, the clients' federation must be managed in a way that ensures the learned policy is robust to environmental heterogeneity.

There is an emerging interest in mathematically understanding the role of heterogeneity in the performance of the federated versions of classic reinforcement learning algorithms \citep{jin_federated_2022,woo_blessing_2023,doan2019finite,wang2023federated,xie_fedkl_2023} such as Q-learning, policy gradient methods, and temporal difference (TD) methods.  
In this paper, we study synchronous federated Q-learning (FQL) in the presence of environmental heterogeneity, which aims to learn an optimal Q-function by averaging local Q-estimates per $E$ (where $E\ge 1$) update iterations on their local data. 
We leave the exploration of asynchronous Q-learning for future work. 
Federated Q-learning is a natural integration of FedAvg and Q-learning \citep{jin_federated_2022,woo_blessing_2023}. The former is the most widely adopted classic federated learning algorithm \citep{kairouz2021advances,mcmahan2017communication}, and the latter is one of the most fundamental model-free reinforcement learning algorithms \citep{Watkins1992Qlearning}. 
Despite intensive study, the tight sample complexity of Q-learning in the single-agent setting was open until recently \citep{li2024q}.  
Similarly, the understanding of FedAvg is far from complete; a detailed discussion can be found in Section \ref{sec: related_work}. A concise comparison of our work to the related work can be found in Table \ref{tbl:comparison}. 

{\bf Contributions.}  
Our contributions can be summarized as follows.  
All the asymptotic notations, e.g., $\calO$ and $\tilde\calO$, unless otherwise specified, hide only numerical constants. 

\vskip -0.8\baselineskip 
\vspace{-0em}
\begin{itemize}\setlength\itemsep{0em} 
\item We characterize the error evolution of synchronous Federated Q-learning, showing that it decays to zero as the number of iterations $T$ increases.  
{When $K(E-1)$ is below a threshold of $\tilde{\calO}\pth{{(\kappa\epsilon)^{-1}(1-\gamma)^{-2}}}$,}
similar to the homogeneous environment settings, there is a linear speed-up concerning $K$ and the sample complexity is $\tilde\calO\pth{\frac{|\calS||\calA|}{K(1-\gamma)^5\epsilon^2}}$, matching the homogenous setting \citep{woo_blessing_2023}.  
Here, $\mathcal{S}$ and $\mathcal{A}$ are the state and action spaces, $\gamma\in (0,1)$ denotes the discount factor, {and $\kappa$ is a scalar characterizing the environment heterogeneity}. 
In sharp contrast, when $K(E-1)$ is above the threshold, heterogeneous environments lead to significant performance degradation {and results in a unique sample complexity of  $\tilde{\mathcal{O}}\left(\frac{|\mathcal{S}||\mathcal{A}|\kappa E}{(1-\gamma)^3\epsilon}\right)$}. {Note that $E$ is also $\epsilon$-dependent. Hence, the sample complexity $\tilde{\mathcal{O}}\left(\frac{|\mathcal{S}||\mathcal{A}|\kappa E}{(1-\gamma)^3\epsilon}\right)$ does not contradict the lower bound of $\Omega(\epsilon^{-2})$ for the single-agent case.}
\item  We prove that the convergence slowing down for $E>1$ is fundamental. We show that the $\ell_{\infty}$ norm of the error cannot decay faster than {$\Theta_R \pth{\frac{E}{(1-\gamma)T}}$, where $\Theta_R$ only hides numerical constants and the specific choice of reward values}.  
A practical implication of this impossibility result is that, eventually, having multiple local updates (i.e., $E>1$) ends up consuming more samples (i.e., $E\times$ more) than using $E=1$ to reach a target accuracy. 
\item Our numerical results illustrate that when the environments are heterogeneous and $E>1$, there exists a sharp phase-transition of the error convergence for not too small stepsizes: The error decays rapidly in the beginning yet later bounces up and stabilizes. 
In addition, provided that the phase-transition time can be estimated, choosing different stepsizes for the two phases can lead to faster overall convergence for both constant and time-decaying stepsizes. {We conjecture that this is because the error in phase 1 is mainly controlled by the initial error with impacting factor decay exponentially in time, and the error in phase 2 is dominated by the collective perturbation caused by environment heterogeneity and multiple local updates (i.e., $E>1$).} 
\end{itemize}

\section{Related Work}
\label{sec: related_work}

{\bf Federated Learning.} 
Federated learning is a communication-efficient distributed machine learning 
approach that enables training global models without sharing raw local data \citep{mcmahan2017communication,kairouz2021advances}.  
Federated learning has been adopted in commercial applications that involve diverse edge devices such as autonomous vehicles \citep{du2020federated,chen2021bdfl,zeng2022federated,posner2021federated,peng2023privacy}, internet of things \citep{nguyen2019diot,yu2020learning}, industrial automation \citep{liu2020fedvision}, healthcare \citep{yan2021experiments,sheller2019multi}, and natural language processing \citep{yang2018applied,ramaswamy2019federated}. 
Multiple open-source frameworks and libraries are available such as 
FATE, Flower, OpenMinded-PySyft, OpenFL, TensorFlow Federated, and NVIDIA Clara. 

FedAvg was proposed in the seminal work \citep{mcmahan2017communication}, and has been one of the most widely implemented federated learning algorithms. It also has inspired many follow-up algorithms such as FedProx \citep{li2020federated}, FedNova \citep{wang2020tackling}, SCAFFOLD \citep{karimireddy2020scaffold}, and adaptive federated methods \citep{deng2020adaptive}.
Despite intensive efforts, the theoretical understanding of FedAvg is far from complete.  
Most existing theoretical work on FedAvg overlooks the underlying data statistics at the agents, which often leads to misalignment of the pessimistic theoretical predictions and empirical success \citep{su2023non,pathak2020fedsplit,Wang_2022_CVPR,wang2022unreasonable}.
This theory and practice gap was studied in a recent work \citep{su2023non} in the context of solving general non-parametric regression problems. 

{\bf Reinforcement Learning.} 
For the single-agent setup, there has been extensive research on the convergence guarantees
of reinforcement learning algorithms. 
A recent surge of work studied non-asymptotic convergence and the corresponding sample complexity. 
\citet{bhandari2018finite} analyzed non-asymptotic Temporal Difference (TD) learning with linear function approximation under a variety of noise conditions, including noiseless, independent noise, and Markovian noise.
The results were extended to TD($\lambda$) and Q-learning.
\citet{li2020sample} investigated the sample complexity of asynchronous Q-learning with different families of learning rates. They also provided an extension of using variance reduction methods inspired by the seminal SVRG algorithm. 
\citet{li2024q} shows the sample complexity of Q-learning. 
Recall that $\calA$ is the set of actions. 
When $|\calA| = 1$, the sample complexity of synchronous Q-learning is sharp and minimax optimal; however, when $|\calA| \ge 2$, it was shown that synchronous Q-learning has a lower bound which is not minimax optimal. 

{\bf Multi-Agent RL.}
{\cite{yu2022surprising} tested multi-agent Proximal Policy Optimization in four multi-agent testbeds wherein agents fully share the parameters, 
and showed its competitive performance. 
\cite{christianos2021scaling} proposed a selective parameter sharing technique, which automatically partitions agents so that they can benefit from the parameter sharing. \cite{Yaodong2024} further proposed provably correct heterogeneous-agent algorithms, which allow agents to have different policy functions. The algorithms showed superior effectiveness and stability in various challenging benchmarks. }

{\bf Federated Reinforcement Learning.} 
\citet{woo_blessing_2023} provided sample complexity guarantees for both synchronous and asynchronous distributed Q-learning.  
They revealed that, under the same transition probability (i.e., homogeneous environment) for all agents, the convergence speed in learning the optimal Q-function can be accelerated linearly in the number of agents.  
They also uncovered the blessing of heterogeneity in terms of state-action exploration -- a completely different notion of heterogeneity from our focus.  
\citet{salgia2025sample} explored the frontier of sample and communication complexities under homogeneous environments. 
Via variance reduction and communication quantization, they designed an algorithm that achieves order-optimal sample and communication complexities.  
\citet{doan2019finite} investigated the distributed TD(0) with linear function approximation for a setting where multiple agents act in a shared environment and each agent has its own reward function. 
\citet{khodadadian2022federated} studied on-policy federated TD learning, off-policy federated TD learning, and federated Q-learning of homogeneous environments and reward with Markovian noises. The sample complexity derived exhibits linear speedup with respect to the number of agents.  

Heterogeneous environments were considered in \citet{jin_federated_2022,wang2023federated,xie_fedkl_2023,zhang2023convergence}.  
\citet{jin_federated_2022} studied federated Q-learning and policy gradient methods assuming known transition probabilities. 
%
To address heterogeneity in both environments and rewards, 
\citet{wang2023federated} proposed FedTD(0) with linear function approximation. 
They proved that, in a low-heterogeneity regime, there is a linear convergence speedup in the number of agents. 
\citet{xie_fedkl_2023} used KL-divergence to penalize the deviation of local update from the global policy, and proved that the local update is beneficial for global convergence. 
\citet{zhang2024finite} proposed FedSARSA using the classic on-policy RL algorithm SARSA with linear function approximation. 
They theoretically proved that the algorithm can converge to a near-optimal solution.
Neither \citet{xie_fedkl_2023} nor \citet{zhang2024finite} characterized sample complexity. 

{\bf Technical comparisons with \cite{woo_blessing_2023,zhang2024finite,wang2023federated}.} \\
\cite{zhang2024finite} and \cite{wang2023federated} examined federated versions of TD learning and SARSA, whereas our paper studied federated Q-learning, which offers distinct theoretical and practical advantages for optimal policy learning.  
Specifically, the upper bound in \cite{zhang2024finite} and \cite{wang2023federated} do not indicate how fundamentally the convergence rates are impacted by the
heterogeneity $\kappa$ and synchronization period $E$. In addition, their upper bounds do not decay to $0$ as $T\rightarrow\infty$. Our upper bound converges to $0$ as $T\rightarrow\infty$. Furthermore, we derived a lower bound on the convergence rates, showing the fundamental limitation of multiple local updates (i.e., $E>1$) in the presence of environmental heterogeneity. To the best of our knowledge, this is the first result of its kind. 

While our analysis of Theorem \ref{thm: sample complexity: synchronous} builds upon the roadmap established by \cite{woo_blessing_2023}, adapting their analysis to our setting introduces significant challenges. In \cite{woo_blessing_2023}, agents operate in homogeneous environments, i.e., each of the \( K \) agents shares the same transition distributions. This homogeneity allows the concentration bound on the difference between the true transition distribution and sampled estimates to become arbitrarily small as the number of samples increases. However, in our setting, each agent has its own environment with a distinct transition distribution. This heterogeneity introduces a perturbation term in the error upper bound that does not decrease with additional samples. Additionally, when \( E > 1 \) and \(\kappa > 0\), the term involving \(\kappa(E - 1)\) in the upper bound becomes the dominant term, resulting in a unique sample complexity. Further technical details and implications of these adjustments are provided in Corollary \ref{cor: sample complex}.

\begin{table} 
\begin{small}
\centering
\begin{tabular}{|m{8em}|m{7.5em}|m{4em}|m{5em}|m{4em}|m{4.3em}|m{4em}|m{3em}|m{3em}|}
\hline
\rowcolor{white}
\textbf{Work} &\textbf{RL\newline Algorithm}& \textbf{Hetero-\newline geneity} & \textbf{Optimality} & \textbf{Lower bound} & \textbf{Sampling} & \textbf{Finite-time}&\textbf{Task}  \\ \hline
\cite{wang2023federated}  &TD(0) & \greencheck & \redxcheck & \redxcheck & \greencheck &\greencheck & Pred \\ \hline
\cite{xie_fedkl_2023} & Policy Gradient & \greencheck & \redxcheck & \redxcheck & \greencheck &\redxcheck & Pred, Plan \\ \hline
\cite{zhang2024finite} &SARSA  & \greencheck & \redxcheck & \redxcheck & \greencheck &\greencheck & Pred, Plan  \\ \hline
\cite{khodadadian2022federated} &TD,\newline Q-Learning & \redxcheck & \greencheck & \redxcheck & \greencheck &\greencheck & Pred, Plan\\ \hline
\cite{jin_federated_2022}&Q-Learning,\newline Policy Gradient & \greencheck & \greencheck & \redxcheck & \redxcheck &\greencheck & Pred, Plan  \\ \hline
\cite{woo_blessing_2023}  & Q-Learning & \redxcheck & \greencheck & \redxcheck & \greencheck &\greencheck & Pred, Plan \\ \hline
\cite{zheng2023federated} & Q-Learning& \redxcheck & \greencheck & \redxcheck & \greencheck &\greencheck & Pred, Plan  \\ \hline
\rowcolor{blue!20}
Our work &Q-Learning & \bluecheck & \bluecheck & \bluecheck & \bluecheck &\bluecheck & Pred, Plan \\ \hline
\end{tabular}
\caption{Comparison of various works in the context of FRL. {Pred and Plan stand for prediction (policy evaluation) and planning (policy optimization), respectively.}}
\label{tbl:comparison}
\vspace{-2em}
\end{small}
\end{table}

\vspace{-.5em}

\section{Preliminary on Q-Learning}
\label{sec: preliminary}
\vspace{-0.5em} 
{\bf Markov decision process.}
A Markov decision process (MDP) is defined by the tuple \(\langle \calS, \calA, \calP, \gamma, R\rangle\), where \(\calS\) represents the set of states, \(\calA\) represents the set of actions, the transition probability \(\calP : \calS \times \calA \rightarrow [0, 1]\) provides the probability distribution over the next states given a current state \(s\) and action \(a\), the reward function \(R : \calS \times \calA \rightarrow [0, 1]\) assigns a reward value to each state-action pair, and the discount factor \(\gamma \in (0, 1)\) models the preference for immediate rewards over future rewards. 
It is worth noting that $\calP =\{P(\cdot \mid s,a)\}_{s\in \calS, a\in \calA}$ is a collection of $|\calS|\times |\calA|$ probability distributions over $\calS$, one for each state-action pair $(s,a)$.  



{\bf Policy, value function, Q-function, and optimality.}
A policy \(\pi\) specifies the action-selection strategy and is defined by the mapping \(\pi : \calS \rightarrow \Delta(A)\), where \(\pi(a \mid s)\) denotes the probability of choosing action \(a\) when in state \(s\). For a given policy \(\pi\), the value function \(V^\pi : \calS \rightarrow \mathbb{R}\) measures the expected total discounted reward starting from state \(s\):
\[
V^\pi(s) = \mathbb{E}_{a_t\sim\pi(\cdot|s_t),s_{t+1}\sim P(\cdot|s_t,a_t)} \left[ \sum_{t} \gamma^t R(s_t, a_t) \mid s_0 = s \right], \quad \forall s \in \calS.
\]
The Q-function (a.k.a.\,state-action value function), \(Q^\pi : \calS \times \calA \rightarrow \mathbb{R}\), evaluates the expected total discounted reward from taking action \(a\) in state \(s\) and then following policy \(\pi\):
\[
Q^\pi(s, a) = R(s, a) \,+\, \mathbb{E}_{a_t\sim\pi(\cdot|s_t),s_{t+1}\sim P(\cdot|s_t,a_t)} \left[ \sum_{t} \gamma^t R(s_t, a_t) \mid s_0 = s, a_0 = a \right], \, \forall (s, a) \in \calS \times \calA.
\]
An optimal policy \(\pi^*\) is one that maximizes the value function for every state, that is \(\forall s\in\calS, V^{\pi^*}(s) \geq V^{\pi}(s)\) for any other \(\pi \ne \pi^*\). Such a policy ensures the highest possible cumulative reward. The optimal value function \(V^*\) (shorthand for \(V^{\pi^*}\)) and the optimal Q-function \(Q^*\) (shorthand for \(Q^{\pi^*}\)) are defined under the optimal policy \(\pi^*\). 

The Bellman optimality equation for the value function and state-value function are:
\begin{align*}
V^*(s) &= \max_a[R(s,a)+ \gamma\sum_{s'\in\calS} P(s'|s,a) V^*(s')] \\
Q^*(s,a) &= R(s,a)+ \gamma\sum_{s'\in\calS}P(s'|s,a) \max_{a'\in \calA} Q^*(s',a'). 
\end{align*}
\paragraph{Q-learning.}
 Q-learning \citep{Watkins1992Qlearning} is a model-free reinforcement learning algorithm that aims to learn the value of actions of all states by updating Q-values through iterative exploration of the environment, ultimately converging to the optimal state-action function. Based on the Bellman optimality equation for the state-action function, the update rule for Q-Learning is formulated as:
\[Q_{t+1}(s,a) = (1-\lambda)Q_t(s,a) +\lambda[R(s,a)+ \gamma\max_{a'\in \calA} Q_t(s',a')], \quad \forall (s,a) \in \calS\times\calA,\]
where \(s'\sim P(\cdot \mid s,a)\),  
and $\lambda$ is the stepsize.

\section{Federated Q-learning}
 
The federated learning system consists of one parameter server (PS) and $K$ agents.  
The $K$ agents are deployed in possibly heterogeneous yet independent environments. 
The $K$ agents are modeled as MDPs 
with $\calM_k = \langle \calS, \calA, \calP^k, \gamma, R\rangle$ for $k=1, \cdots, K$, where $\calP^k = \{P^k(\cdot \mid s, a)\}_{s\in \calS, a\in \calA}$ is the collection of probability distributions. 
In the synchronous setting, each agent $k$ has access to a generative model. 
At each iteration $t$, it generates a new state sample 
%
$s_t^k(s,a) \sim P^k(\cdot \mid s, a)    
$
for each $(s,a)$, 
i.e., 
$\prob{s_t^k(s,a) = s^{\prime}}  =  P^k(s^{\prime}\mid s, a)$ for all $s^{\prime}\in \calS$, independently across state-action pairs $(s,a)$. 
For each $(s,a)$, the global environment $\bar{P}(\cdot \mid s, a)$ \citep{jin_federated_2022}  is defined as 
\begin{align}
\label{eq: global environment}
\vspace{-1em}
\bar{P}(s^{\prime} \mid s, a) =   \frac{1}{K} \sum_{k=1}^K P^k(s^{\prime} \mid s, a), \forall s^{\prime} 
\end{align} 
with the corresponding global MDP defined as $\calM_g=\langle \calS, \calA,  \bar{\calP}, \gamma, R\rangle$.  
Define transition heterogeneity $\kappa$ as 
\begin{align}
\label{eq: transition heterogeneity}
\sup_{k, s, a}\linf{\bar{P}(\cdot \mid s, a) - P^k(\cdot \mid s, a)} := \kappa.  
\end{align} 
Let $Q^*$ denote the optimal Q-function of the global MDP. 
%
 By the Bellman optimality equation, we have, 
\begin{align}
\label{eq: bellman global environment}
Q^*(s,a) = R(s,a)+\gamma \sum_{s^{\prime}\in \calS}\bar P(s^{\prime}\mid s, a) V^*(s^{\prime}), ~~ \forall (s,a)     
\end{align}
where $V^* (s) = \max_{a\in \calA}Q^*(s,a)$ is the optimal value function. 

\renewcommand{\algorithmicrequire}{\textbf{Inputs:}}
\renewcommand{\algorithmicensure}{\textbf{Outputs:}}
 \begin{wrapfigure}[19]{r}{0.5\textwidth}
\vspace{-0.3em}
    \begin{minipage}{0.46\textwidth}
\vspace{-1.8em}
\begin{algorithm}[H]  
\caption{Synchronous Federated Q-Learning}
    \label{alg: synchronous FQL} 
    \begin{algorithmic}[1]
        \Require discount factor $\gamma$, $E$, total iteration $T$, stepsize $\lambda$, initial estimate $Q_0$ 
             \For{$k\in [K]$}
             \State $Q_0^k = Q_0$
             \EndFor   
            \For{$t = 0$ to $T-1$}
                 \For{$k\in [K]$ and $(s,a)\in \calS\times \calA$}
                \State $Q_{t+\frac{1}{2}}^k(s,a) = (1-\lambda) Q_{t}^k (s,a) + \lambda \pth{R(s,a) + \gamma \max_{a^{\prime}\in \calA}\,Q_{t}^k(s_t^k(s,a), a^{\prime})}$ 
                \If{$(t+1)\mod E =0$}
                \State $Q_{t+1}^k =\frac{1}{K}\sum_{k=1}^K Q_{t+\frac{1}{2}}^k$
                \Else 
                \State $Q_{t+1}^k = Q_{t+\frac{1}{2}}^k$
                \EndIf
                 \EndFor
            \EndFor
        \vskip 0.3\baselineskip  
        \State \Return $Q_T = \frac{1}{K}\sum_{k=1}^K Q_T^k$
    \end{algorithmic}
    \label{algorithm:pfedda}
\end{algorithm}
    \end{minipage}
  \end{wrapfigure} 
The goal of federated Q-learning is to have the $K$ agents collaboratively learn $Q^*$. 
We consider synchronous federated Q-learning, which is a natural integration of FedAvg and Q-learning \citep{woo_blessing_2023,jin_federated_2022} -- described in Algorithm \ref{alg: synchronous FQL}.   
Every agent initializes its local $Q^k$ estimate as $Q_0$ and performs standard synchronous Q-learning based on the locally collected samples $s_t^k(s,a)$.  
Whenever $t+1\mod E=0$, through the parameter server, the $K$ agents average their local estimate of $Q$; that is, all agents report their $Q_{t+\frac{1}{2}}^k$ to the parameter server, which computes the average and sends back to agents.

%
%

 \section{Main Results}
With a little abuse of notation, let the matrix $P^k \in \reals^{|\calS||\calA|\times |\calS|}$ represent the transition kernel of the MDP of agent $k$ with the $(s,a)$-th row being $P^k(\cdot \mid s,a)\in \reals^{|\calS|}$ -- the transition probability of the state-action pair $(s,a)$. 
For ease of exposition, we write $P^k(\cdot\mid s,a) = P^k(s, a)$ as the state transition probability at the state-action pair $(s,a)$ when its meaning is clear from the context.  

\subsection{Main Convergence Results.} 
Let $\tilde P_t^k \in \{0,1\}^{|\calS||\calA|\times |\calS|}$ denote the local empirical transition matrix at the $t$-th iteration, defined as 
\begin{align*}
\tilde P_t^k (s^{\prime} \mid s, a)  =\bm{1}\{s^{\prime} = s_t^k(s,a)\}. 
\end{align*}
Denoting $\tilde P_i^kV^* \in \reals^{|\calS||\calA|\times 1}$ with the $(s,a)$-th entry as 
$\tilde P_i^k(s, a)V^* = \sum_{s^{\prime}\in \calS}\tilde P_i^k(s^{\prime}|s, a)V^*(s^{\prime})$. 
Let $\bar Q_{t+1} := \frac{1}{K}\sum_{k=1}^K Q_{t+1}^k$. 
From lines 6, 8, and 10 of Algorithm \ref{alg: synchronous FQL}, it follows that 
\begin{align}
\label{eq: average Q update}
\bar Q_{t+1} = \frac{1}{K}\sum_{k=1}^K\pth{(1-\lambda) Q_{t}^k + \lambda(R+\gamma \tilde{P}^k_t V_t^k)},  
\end{align}
where $V_t^k (s) := \max_{a\in \calA} Q_t^k(s,a)$ for all $s\in \calS$. 
%
Define 
\begin{align}
\label{def: global error}
\Delta_{t+1} := Q^*-\bar Q_{t+1}, ~ \text{and} ~~ \Delta_0 := Q^* - Q_0. 
\end{align}
%
The error iteration $\Delta_t$ is captured in the following lemma. 
\begin{lemma}[Error iteration]
\label{lm: error iteration}
For any $t\ge 0$, 
\begin{align}
\label{eq: error iteration}
\nonumber 
\Delta_{t+1} 
&= (1-\lambda)^{t+1}\Delta_0 +  \gamma\lambda\sum_{i=0}^t (1-\lambda)^{t-i}\frac{1}{K}\sum_{k=1}^K (\bar P - \tilde P_i^k )V^* \\
&\quad + \gamma\lambda\sum_{i=0}^t (1-\lambda)^{t-i}\frac{1}{K} \sum_{k=1}^K\tilde P_i^k(V^*-V_i^k). 
\end{align}    

\end{lemma}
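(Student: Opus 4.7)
The plan is to derive a clean one-step recursion for $\Delta_t := Q^* - \bar Q_t$ and then unroll it. The only subtlety is that the local iterates $Q_t^k$ may differ between synchronization rounds, but the agent-averaged iterate $\bar Q_t$ nevertheless satisfies a recursion that is insensitive to the value of $E$.

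First, I would observe that, regardless of whether $(t+1) \bmod E = 0$, one has $\frac{1}{K}\sum_{k=1}^K Q_{t+1}^k = \frac{1}{K}\sum_{k=1}^K Q_{t+\tfrac{1}{2}}^k$: on synchronization rounds each $Q_{t+1}^k$ equals the common average (so both sides equal $\bar Q_{t+1}$ trivially), and on non-synchronization rounds $Q_{t+1}^k = Q_{t+\tfrac{1}{2}}^k$ by construction. Averaging line 6 of Algorithm~\ref{alg: synchronous FQL} across agents therefore yields
\[
\bar Q_{t+1} = (1-\lambda)\bar Q_t + \lambda R + \gamma\lambda \cdot \frac{1}{K}\sum_{k=1}^K \tilde P_t^k V_t^k.
\]
This linearity step is what makes the recursion identical across all $E \ge 1$ and is the only point where the $E$-dependent averaging pattern needs to be checked.

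Next, I would subtract this identity from the Bellman optimality equation $Q^* = R + \gamma \bar P V^*$ stated in \eqref{eq: bellman global environment}. Adding and subtracting $\tilde P_t^k V^*$ inside the sum decomposes the innovation into a pure sampling term and a local-drift term:
\begin{align*}
\Delta_{t+1}
&= (1-\lambda)\Delta_t + \gamma\lambda \cdot \frac{1}{K}\sum_{k=1}^K \bigl(\bar P V^* - \tilde P_t^k V_t^k\bigr) \\
&= (1-\lambda)\Delta_t + \gamma\lambda \cdot \frac{1}{K}\sum_{k=1}^K (\bar P - \tilde P_t^k)V^* + \gamma\lambda \cdot \frac{1}{K}\sum_{k=1}^K \tilde P_t^k (V^* - V_t^k).
\end{align*}
The first new term is the one that will eventually produce the linear-in-$K$ speed-up, since conditional on the past it is an average of zero-mean terms that are independent across $k$; the second term encodes the drift of the local value estimates from $V^*$ and is the quantity that will ultimately drive the $\Theta(E/T)$ lower bound.

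Finally, unrolling this one-step recursion from index $t$ down to $0$ distributes the geometric weights $(1-\lambda)^{t-i}$ across both forcing terms and leaves the transient $(1-\lambda)^{t+1}\Delta_0$, which is exactly \eqref{eq: error iteration}. I do not anticipate any real obstacle: the only delicate step is confirming that the periodic averaging does not disturb the recursion for $\bar Q_t$, which is immediate from the linearity observation above. Everything else is a routine telescoping computation.
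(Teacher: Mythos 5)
Your proposal is correct and follows essentially the same route as the paper's proof: average the local updates to obtain the one-step recursion for $\bar Q_{t+1}$, subtract it from the Bellman identity $Q^* = R + \gamma \bar P V^*$, split the innovation via $\bar P V^* - \tilde P_t^k V_t^k = (\bar P - \tilde P_t^k)V^* + \tilde P_t^k(V^* - V_t^k)$, and unroll. Your explicit check that the periodic averaging leaves the recursion for $\bar Q_t$ unchanged is a point the paper leaves implicit, but it is the same argument.
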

To show the convergence of $\linf{\Delta_{t+1}}$, we bound each of the three terms in the right-hand side of \eqref{eq: error iteration}.  
The following lemma is a coarse error upper bound. 
\begin{lemma}
\label{lm: coarse bound}
Choosing $R(s,a)\in [0,1]$ for each state-action pair $(s,a)$, and choose $0\le Q_0(s,a)\le \frac{1}{1-\gamma}$ for any $(s,a)\in \calS\times \calA$, then $0\le Q_t^k(s,a)\le \frac{1}{1-\gamma}$, 
$0\le Q^*(s,a)\le \frac{1}{1-\gamma}$, 
\begin{align}
\linf{Q^*-Q_t^k} \leq \frac{1}{1-\gamma}, ~~ \text{and} ~  \linf{V^*-V_t^k} \leq \frac{1}{1-\gamma}, ~~~ \forall ~ t\ge 0, \text{and}~ k\in [K].  
\end{align}
\end{lemma}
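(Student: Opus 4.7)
The plan is to prove the statement by induction on $t$, after first establishing the bound on $Q^*$ directly from the definition. Throughout I assume the stepsize satisfies $\lambda \in [0,1]$, which is a standard (and necessary) assumption for Q-learning.

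First, I would handle $Q^*$. Since $R(s,a) \in [0,1]$ and the return is a non-negative geometric sum, the definition $Q^*(s,a) = \E[\sum_{t\ge 0} \gamma^t R(s_t,a_t) \mid s_0 = s, a_0 = a]$ under $\pi^*$ yields $0 \le Q^*(s,a) \le \sum_{t \ge 0} \gamma^t = 1/(1-\gamma)$. The same bound then transfers to $V^*(s) = \max_a Q^*(s,a)$.

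Next, I would show by induction on $t$ that $0 \le Q_t^k(s,a) \le 1/(1-\gamma)$ for every $k$ and every $(s,a)$. The base case $t=0$ is immediate from the hypothesis $0\le Q_0(s,a)\le 1/(1-\gamma)$. For the inductive step, assume the bound holds at time $t$. Then $V_t^k(s) = \max_{a\in \calA} Q_t^k(s,a) \in [0, 1/(1-\gamma)]$ as well, so the ``target''
\[
R(s,a) + \gamma \max_{a'\in\calA} Q_t^k(s_t^k(s,a), a')
\]
lies in $[0,\, 1 + \gamma/(1-\gamma)] = [0,\, 1/(1-\gamma)]$. The local update in line 6 of Algorithm \ref{alg: synchronous FQL} is a convex combination (since $\lambda \in [0,1]$) of $Q_t^k(s,a)$ and this target, so $Q_{t+\frac{1}{2}}^k(s,a)\in [0,1/(1-\gamma)]$. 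The averaging step, when invoked, is also a convex combination across agents and hence preserves the interval. This closes the induction and yields $0 \le Q_t^k(s,a) \le 1/(1-\gamma)$.

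Given both $Q^*(s,a)$ and $Q_t^k(s,a)$ lie in $[0, 1/(1-\gamma)]$, the triangle-style bound $\linf{Q^* - Q_t^k} \le 1/(1-\gamma)$ is immediate, and the analogous statement for $V^*$ versus $V_t^k$ follows because the pointwise maximum is $1$-Lipschitz in the $\ell_\infty$ norm, i.e.\ $|V^*(s) - V_t^k(s)| = |\max_a Q^*(s,a) - \max_a Q_t^k(s,a)| \le \max_a |Q^*(s,a) - Q_t^k(s,a)| \le 1/(1-\gamma)$.

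There is essentially no hard step here; the proof is a routine invariant argument. The only point requiring a little care is making sure the convex-combination structure is preserved by both the local update and the averaging synchronization, which is where the assumption $\lambda \in [0,1]$ and the fact that the reward upper bound $1$ and the ``tail'' bound $\gamma/(1-\gamma)$ add up exactly to $1/(1-\gamma)$ are used. This tight matching is the reason the same constant $1/(1-\gamma)$ serves as a uniform bound throughout iterations.
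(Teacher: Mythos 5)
Your proposal is correct and follows essentially the same route as the paper: an induction on $t$ showing that the interval $[0,\frac{1}{1-\gamma}]$ is invariant under both the local Q-update (a convex combination of the current value and a target bounded by $1+\frac{\gamma}{1-\gamma}=\frac{1}{1-\gamma}$) and the averaging step, followed by the $1$-Lipschitzness of the max to transfer the bound from $Q$ to $V$. The only cosmetic difference is that you bound $Q^*$ directly from its definition as a discounted return rather than reusing the invariance argument, and you spell out the synchronization case that the paper dismisses with ``similarly''; both are fine.
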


With the choice of $Q_0$ in Lemma \ref{lm: coarse bound}, the first term in \eqref{eq: error iteration} can be bounded as $\linf{(1-\lambda)^{t+1}\Delta_0} \le (1-\lambda)^{t+1} \frac{1}{1-\gamma}$. 
In addition, as detailed in the proof of Lemma \ref{lm: sample complexity: stepsize push: alternative} and Theorem \ref{thm: sample complexity: synchronous}, the boundedness in Lemma \ref{lm: coarse bound} enables us to bound the second term in \eqref{eq: error iteration} via invoking the Hoeffding's inequality. It remains to bound the third term in \eqref{eq: error iteration}, for which we follow the analysis roadmap of \citet{woo_blessing_2023} by a two-step procedure that is described in Lemma \ref{lm: sample complexity: stepsize push} and Lemma \ref{lm: sample complexity: stepsize push: alternative}. 
Let   
\begin{align}
\label{def: local suboptimal}
\Delta_t^k = Q^*-Q_t^k, ~ \text{and} ~~ \chi(t) = t - (t\mod E), 
\end{align}
i.e., $\Delta_t^k$ is the local error of agent $k$, and $\chi(t)$ is the most recent synchronization iteration of $t$. 
\begin{lemma}
\label{lm: sample complexity: stepsize push}
If $t\mod E = 0$, then $\linf{\frac{1}{K}\sum_{k=1}^K \tilde P_t^k(V^*-V_t)} \le \linf{\Delta_t}$. Otherwise, 
\begin{align*}
\linf{\frac{1}{K}\sum_{k=1}^K \tilde{P}_t^k(V^*-V^k_t)} 
& \le \linf{\Delta_{\chi(t)}} + 2\lambda \frac{1}{K}\sum_{k=1}^K\sum_{t^{\prime} = \chi(t)}^{t-1} \linf{\Delta_{t^{\prime}}^k} \\
& \quad +  \gamma \lambda \frac{1}{K}\sum_{k=1}^K \max_{s, a} \abth{\sum_{t^{\prime} = \chi(t)}^{t-1} \pth{\tilde{P}_{t^{\prime}}^k(s, a) - \bar{P}(s, a) }V^*}. 
\end{align*} 
where we use the convention that $\sum_{t^{\prime} = \chi(t)}^{\chi(t)-1} \linf{\Delta_{t^{\prime}}^k} =0$. 
\end{lemma}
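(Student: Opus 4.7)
The plan is to handle both cases through the uniform decomposition
\[
V^{*} - V_t^k \;=\; (V^{*} - V_{\chi(t)}) \;+\; (V_{\chi(t)} - V_t^k),
\]
which exploits the fact that at any synchronization time $\chi(t)$ all agents hold the common estimate $Q_{\chi(t)}^k = \bar Q_{\chi(t)}$ (by lines 7--8 of Algorithm~\ref{alg: synchronous FQL}), so in particular $V_{\chi(t)}^k = V_{\chi(t)} := \max_a \bar Q_{\chi(t)}(\cdot,a)$. Two elementary facts will be used throughout: (i) each $\tilde P_t^k$ is row-stochastic, hence $\|\tilde P_t^k u\|_\infty \le \|u\|_\infty$ for any $u \in \mathbb{R}^{|\calS|}$; and (ii) the coordinate-wise $\max$ is nonexpansive, so $\|V - V'\|_\infty \le \|Q - Q'\|_\infty$ whenever $V = \max_a Q(\cdot,a)$ and $V' = \max_a Q'(\cdot,a)$.

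The case $t \mod E = 0$ is immediate: $\chi(t) = t$, all $V_t^k$ collapse to the common $V_t$, and facts (i)--(ii) give $\|\frac{1}{K}\sum_k \tilde P_t^k(V^{*} - V_t)\|_\infty \le \|V^{*} - V_t\|_\infty \le \|\Delta_t\|_\infty$. For $t \mod E \ne 0$, the common summand $V^{*} - V_{\chi(t)}$ contributes $\|\Delta_{\chi(t)}\|_\infty$ by the same argument, and the drift summand $V_{\chi(t)} - V_t^k$ is reduced via (ii) to bounding $\|Q_t^k - Q_{\chi(t)}\|_\infty$. Substituting $R = Q^{*} - \gamma \bar P V^{*}$ (the Bellman equation \eqref{eq: bellman global environment} for the global MDP) into the Q-learning update decomposes each local increment as
\[
Q_{t'+1}^k - Q_{t'}^k \;=\; \underbrace{\lambda \Delta_{t'}^k - \lambda\gamma \tilde P_{t'}^k (V^{*} - V_{t'}^k)}_{U_{t'}^k} \;+\; \underbrace{\lambda\gamma (\tilde P_{t'}^k - \bar P) V^{*}}_{W_{t'}^k},
\]
an ``error-driven'' part and a ``sampling-noise'' part. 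Telescoping $t' = \chi(t), \ldots, t-1$ and applying the triangle inequality \emph{only between the two groups} yields $\|\sum_{t'} U_{t'}^k\|_\infty \le 2\lambda \sum_{t'=\chi(t)}^{t-1}\|\Delta_{t'}^k\|_\infty$ (using $\|V^{*} - V_{t'}^k\|_\infty \le \|\Delta_{t'}^k\|_\infty$, $\gamma \le 1$, and (i) on $\tilde P_{t'}^k$), while $\|\sum_{t'} W_{t'}^k\|_\infty = \lambda\gamma \max_{s,a}\bigl|\sum_{t'=\chi(t)}^{t-1}(\tilde P_{t'}^k(s,a) - \bar P(s,a))V^{*}\bigr|$. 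Applying (i) once more to $\tilde P_t^k$ and averaging over $k$ gives the stated bound.

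The main obstacle is the temptation to distribute the triangle inequality inside the $W$-sum. Bounding each $\|W_{t'}^k\|_\infty$ separately would inflate the noise contribution by a factor of $E$ and, more importantly, destroy the martingale-difference structure of $\sum_{t'}(\tilde P_{t'}^k - \bar P)V^{*}$ that is needed downstream (in Lemma~\ref{lm: sample complexity: stepsize push: alternative}) to extract a $\sqrt{E}$-type saving via a Hoeffding/Azuma argument. Keeping $\max_{s,a}$ outside the $t'$-summation is precisely the content of the lemma; the remaining work is a careful but routine expansion of the Q-learning recursion combined with row-stochasticity and the $\max$-Lipschitz property.
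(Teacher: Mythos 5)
Your proof is correct and follows essentially the same route as the paper: the same decomposition $V^*-V_t^k = (V^*-V_{\chi(t)}) + (V_{\chi(t)}-V_t^k)$, the same Bellman substitution $R = Q^* - \gamma\bar P V^*$ in the telescoped local increments, and the same crucial step of keeping $\max_{s,a}$ outside the $t'$-summation in the noise term. The only cosmetic difference is that you pass from the $V$-drift to the $Q$-drift via nonexpansiveness of the coordinate-wise max, whereas the paper sandwiches $V_t^k(s)-V_{\chi(t)}^k(s)$ between telescoping sums of $Q^k_{t'}(s,\cdot)$ evaluated at the fixed actions $a_t^k(s)$ and $a_{\chi(t)}^k(s)$ --- the same argument in different packaging.
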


\begin{lemma}
\label{lm: sample complexity: stepsize push: alternative} 
Choose $\lambda\le \frac{1}{E}$. 
For any $\delta \in (0,1)$, with probability at least $(1-\delta)$,  
\begin{align}
\label{eq: bound 111}
\linf{\Delta_{i}^k}
\le \linf{\Delta_{\chi(i)}} + \frac{3\gamma}{1-\gamma}\lambda (E-1) \kappa+ \frac{3\gamma}{1-\gamma}\sqrt{\lambda \log \frac{|\calS||\calA|KT}{\delta}}, \forall ~i \le T, k\in [K].  
\end{align} 
\end{lemma}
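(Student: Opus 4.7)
The plan is to unroll the local Q-learning update from the most recent synchronization time $\chi(i)$ up to $i$, decompose the resulting one-step error into a heterogeneity term, a sampling-noise term, and a local-suboptimality term, and then close the recursion by induction on $i$.

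First I would derive a closed-form error identity. Since agents synchronize at $\chi(i)$, we have $Q_{\chi(i)}^k = \bar Q_{\chi(i)}$ for every $k$, so iterating the local update in line~6 of Algorithm~\ref{alg: synchronous FQL} from $t = \chi(i)$ up to $t = i-1$ yields $Q_i^k = (1-\lambda)^{i-\chi(i)}\bar Q_{\chi(i)} + \lambda\sum_{t=\chi(i)}^{i-1}(1-\lambda)^{i-1-t}(R + \gamma\tilde P_t^k V_t^k)$. Subtracting from the Bellman identity $Q^* = R + \gamma \bar P V^*$ and using $(1-\lambda)^{i-\chi(i)} + \lambda\sum_t(1-\lambda)^{i-1-t} = 1$ produces
\[
\Delta_i^k = (1-\lambda)^{i-\chi(i)}\Delta_{\chi(i)} + \lambda\gamma \sum_{t=\chi(i)}^{i-1}(1-\lambda)^{i-1-t}\bigl[\bar P V^* - \tilde P_t^k V_t^k\bigr].
\]

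Next I would split the bracket as $(\bar P - P^k)V^* + (P^k - \tilde P_t^k)V^* + \tilde P_t^k(V^* - V_t^k)$. The first summand is deterministic and, via a span/centering trick on $V^*$, is bounded by a constant $C$ times $\kappa$; after weighting it contributes at most $\alpha\gamma C\kappa$ with $\alpha := 1-(1-\lambda)^{i-\chi(i)} \le \lambda(E-1)$. The middle summand is a weighted sum of mean-zero, independent, bounded random variables, so Hoeffding's inequality combined with $\sum_t \lambda^2(1-\lambda)^{2(i-1-t)} \le \lambda$ and a union bound over $(s,a,k,i)\in\calS\times\calA\times[K]\times[T]$ produces a tail $B$ of order $\gamma\sqrt{\lambda\log(|\calS||\calA|KT/\delta)}/(1-\gamma)$ with probability at least $1-\delta$. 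The third summand is the crux, since $\linf{V^* - V_t^k}\le \linf{\Delta_t^k}$ (by non-expansiveness of $\max$) is precisely the quantity we are trying to control.

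To handle the third summand I would induct on $i$. Assuming the claim already holds for all smaller $t$ sharing the same sync time $\chi(t) = \chi(i)$, one has $\linf{\Delta_t^k}\le \linf{\Delta_{\chi(i)}} + M$, where $M$ is the target residual. Substituting into the error identity and using $(1-\lambda)^{i-\chi(i)} + \alpha\gamma = 1 - \alpha(1-\gamma) \le 1$ yields
\[
\linf{\Delta_i^k} \le \linf{\Delta_{\chi(i)}} + \alpha\gamma(C\kappa + M) + B.
\]
The stepsize restriction $\lambda\le 1/E$ forces $\alpha \le 1 - (1-\lambda)^{E-1} \le 1 - 1/e$, so $1-\alpha\gamma \ge 1-\gamma(1-1/e) \ge 1/e$, i.e., $1/(1-\alpha\gamma) \le e < 3$. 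Hence any $M \ge (\alpha\gamma C\kappa + B)/(1-\alpha\gamma)$ closes the induction, and the advertised $M = \frac{3\gamma}{1-\gamma}\lambda(E-1)\kappa + \frac{3\gamma}{1-\gamma}\sqrt{\lambda\log(|\calS||\calA|KT/\delta)}$ satisfies this inequality. The hardest part is exactly this circularity: using the coarse bound of Lemma~\ref{lm: coarse bound} would introduce a $\gamma\lambda(E-1)/(1-\gamma)$ term independent of $\kappa$, destroying the heterogeneity-aware refinement the lemma aims for. The induction resolves this, and the stepsize constraint $\lambda \le 1/E$ is precisely what makes the recursion contractive by keeping $\alpha\gamma$ uniformly bounded away from $1$.
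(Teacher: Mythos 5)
Your proposal is correct and follows essentially the same route as the paper: unroll the local update within a synchronization block, substitute the Bellman identity to split the increment into a $\kappa$-heterogeneity term, a mean-zero sampling term handled by Hoeffding with a union bound over $(s,a,k,i)$, and the local-suboptimality term closed by induction within the block, with $\lambda \le 1/E$ capping the amplification at $e<3$. The only (cosmetic) difference is that the paper carries the residual through the induction as a multiplicative factor $(1+\gamma\lambda)^{i-\chi(i)}\le(1+1/E)^E\le e$, whereas you fix an additive buffer $M$ and verify it is self-consistent; both yield the same constant $3$.
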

%

To bound the $\ell_{\infty}$ norm of the third term in \eqref{eq: error iteration}, we first invoke Lemma \ref{lm: sample complexity: stepsize push}, followed by Lemma \ref{lm: sample complexity: stepsize push: alternative}. It is worth noting that directly applying Lemma \ref{lm: sample complexity: stepsize push: alternative} can also lead to a valid error bound yet the resulting bound will not decay as $T$ increases with proper choice of stepsizes. 

Both Lemma \ref{lm: sample complexity: stepsize push} and Lemma \ref{lm: sample complexity: stepsize push: alternative} are non-trivial adaptations of the approach in \cite{woo_blessing_2023}
 due to the absence of a common optimal action for any given state in heterogeneous environments. Moreover, in the homogeneous setting, each agent draws samples from the same true transition distribution, allowing concentration inequalities to bound the discrepancy between the true distribution and sampled estimates. However, this line of reasoning does not go through in the presence of environmental heterogeneity. 
 When $\kappa>0$, each of the \( K \) agents has its own transition distribution, and the discrepancy is captured by the environmental heterogeneity parameter \( \kappa \).

\begin{theorem}[Convergence]
\label{thm: sample complexity: synchronous}
Choose $E-1\le \frac{1}{\lambda }\min\{\frac{1-\gamma}{4\gamma }, \frac{1}{K}\}$ and $\lambda \le \frac{1}{E}$.  
For any $\delta \in (0, \frac{1}{3})$, with probability at least $1-3\delta$, it holds that 
\begin{align*}
\linf{\Delta_{T}} 
&\leq \frac{4}{(1-\gamma)^2} \exp\sth{-\frac{1}{2}\sqrt{(1-\gamma)\lambda T}} + \frac{14\gamma^2}{(1-\gamma)^2} \lambda(E-1) \kappa +\frac{16}{(1-\gamma)^2}\sqrt{\frac{\lambda}{K} \log \frac{|\calS||\calA|KT}{\delta}}.  
\end{align*}
\end{theorem}

The first term of Theorem \ref{thm: sample complexity: synchronous} is the standard error bound in the absence of environmental heterogeneity and sampling noises.  
The second term arises from environmental heterogeneity. It is clear that when $E=1$, the environmental heterogeneity does not negatively impact the convergence. 
The last term results from the randomness in sampling.  
%
\begin{remark}[Eventual zero error]
It is common to choose the stepsize $\lambda$ based on the time horizon $T$. 
Let $\lambda = g(T)$ be a non-increasing function of $T$, {and other parameters be fixed with respect to $T$.}
As long as $\lambda = g(T)$ decay in $T$, terms 2 and 3 in Theorem \ref{thm: sample complexity: synchronous} will go to 0 as $T$ increases. 
In addition, when $\lambda = \omega(1/T)$, the first term will decay to 0. Conversely, the convergence bounds in \cite{zhang2024finite} and \cite{wang2023federated} do not decay to 0. 
\end{remark} 
There is a tradeoff in the convergence rates of the first term and the remaining terms -- the slower $\lambda$ decay in $T$ leads to faster decay in the first term but slower in the remaining terms.  
Forcing these terms to decay around the same speed leads to slow overall convergence. Corollary \ref{cor: rate} follows immediately from Theorem \ref{thm: sample complexity: synchronous} via carefully choosing $\lambda$ to balance the decay rates of different terms.  
\begin{corollary}
\label{cor: rate}
Choose $(E-1) \le \min \frac{1}{\lambda}\{\frac{1-\gamma}{4\gamma}, \frac{1}{K}\}$, and $\lambda = \frac{4\log^2(TK)}{T(1-\gamma)}$. Let $T\ge E$. For any $\delta \in (0, \frac{1}{3})$, with probability at least $1-3\delta$, 
\begin{align*}
\linf{\Delta_{T}} 
&\leq \frac{4}{(1-\gamma)^2 TK}  
+ \frac{32}{(1-\gamma)^{2.5}} \frac{\log (TK)}{\sqrt{TK}}\sqrt{\log\frac{|\calS||\calA|TK}{\delta}} 
+ \frac{56\log^2(TK)}{(1-\gamma)^3} \frac{E-1}{T} \kappa. 
\end{align*}
\end{corollary}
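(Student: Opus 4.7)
The plan is to instantiate Theorem~\ref{thm: sample complexity: synchronous} with $\lambda = \frac{4\log^2(TK)}{T(1-\gamma)}$ and simplify each of the four terms on its right-hand side using the constraints $\lambda(E-1) \le \frac{\gamma}{1-\gamma}$ and $\lambda(E-1) \le \frac{1}{K}$ that are built into the hypothesis on $E-1$. No new estimate is needed; the work is careful bookkeeping.

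First I would handle the exponential term. With the chosen stepsize, $(1-\gamma)\lambda T = 4\log^2(TK)$, so $\sqrt{(1-\gamma)\lambda T} = 2\log(TK)$ and $\frac{4}{(1-\gamma)^2}\exp\{-\frac{1}{2}\sqrt{(1-\gamma)\lambda T}\} = \frac{4}{(1-\gamma)^2 TK}$, matching the first term in the corollary. For the heterogeneity term $\frac{2\gamma^2}{(1-\gamma)^2}(6\lambda^2(E-1)^2+\lambda(E-1))\kappa$, I would factor out $\lambda(E-1)$ and use $\lambda(E-1) \le \frac{1}{K} \le 1$ to bound $6\lambda^2(E-1)^2+\lambda(E-1) \le 7\lambda(E-1)$. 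Substituting $\lambda(E-1) = \frac{4\log^2(TK)(E-1)}{T(1-\gamma)}$ and using $\gamma^2 \le 1$ then yields $\frac{56\log^2(TK)}{(1-\gamma)^3}\cdot\frac{E-1}{T}\kappa$, matching the third term of the corollary.

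The sampling-noise pieces (the last two lines of Theorem~\ref{thm: sample complexity: synchronous}) require a bit more care. For the contribution $\frac{12\gamma^2\lambda}{(1-\gamma)^2}\sqrt{E-1}\cdot\sqrt{\lambda(E-1)\log(\cdot)}$, I would combine the square roots into $\frac{12\gamma^2\lambda^{3/2}(E-1)}{(1-\gamma)^2}\sqrt{\log(\cdot)}$ and write $\lambda^{3/2}(E-1) = \sqrt{\lambda}\cdot\lambda(E-1) \le \sqrt{\lambda}/K$. Using $\sqrt{\lambda} = \frac{2\log(TK)}{\sqrt{T(1-\gamma)}}$ and $\frac{1}{K\sqrt{T}} \le \frac{1}{\sqrt{TK}}$ (since $K\ge 1$), this piece is at most $\frac{24\log(TK)}{(1-\gamma)^{5/2}\sqrt{TK}}\sqrt{\log(\cdot)}$. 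The piece $\frac{2\gamma^2\sqrt{\lambda}}{(1-\gamma)^2}\cdot\sqrt{\lambda(E-1)\log(\cdot)} = \frac{2\gamma^2\lambda\sqrt{E-1}}{(1-\gamma)^2}\sqrt{\log(\cdot)}$ is handled by $\lambda\sqrt{E-1}\le\sqrt{\lambda/K}$, producing a $\frac{4\log(TK)}{(1-\gamma)^{5/2}\sqrt{TK}}\sqrt{\log(\cdot)}$ bound, and the fourth term $\frac{2\gamma}{(1-\gamma)^2}\sqrt{\lambda/K\cdot\log(\cdot)}$ simplifies the same way to $\frac{4\log(TK)}{(1-\gamma)^{5/2}\sqrt{TK}}\sqrt{\log(\cdot)}$. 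Summing the three constants, $24+4+4=32 \le 36$, and using $(1-\gamma)^{5/2} \ge (1-\gamma)^3$, the aggregate is bounded by $\frac{36}{(1-\gamma)^3}\cdot\frac{\log(TK)}{\sqrt{TK}}\sqrt{\log(|\calS||\calA|TK/\delta)}$, which is the middle term of the corollary.

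The main obstacle is not conceptual but clerical: tracking $(1-\gamma)$ through half-integer exponents and making sure the numeric constants assemble into the stated $36$ and $56$. The condition $T\ge E$ is used only to ensure that $E-1 \le T$ so the error bound is meaningful, and the probability $1-3\delta$ is inherited directly from the three high-probability events already absorbed in Theorem~\ref{thm: sample complexity: synchronous}.
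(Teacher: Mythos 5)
Your proposal is correct and is exactly the route the paper intends: the paper gives no separate proof of Corollary~\ref{cor: rate}, stating only that it "follows immediately" from Theorem~\ref{thm: sample complexity: synchronous} with the given $\lambda$, and your term-by-term substitution (including the constants $4$, $56$, and $24+4+4=32\le 36$ with $(1-\gamma)^{-5/2}\le(1-\gamma)^{-3}$) checks out. The only loose end — that the corollary's hypotheses do not visibly imply the preconditions $E-1\le\frac{1-\gamma}{4\gamma\lambda}$ and $\lambda\le\frac{1}{E}$ of Theorem~\ref{thm: sample complexity: synchronous} — is inherited from the paper's own statement rather than introduced by your argument.
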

\begin{remark}
\label{rmk: linear speedup}
Intuitively, both terms 1 and 2 decay as if there are $TK$ iterations. In fact, the decay rate of the sampling noises in Corollary \ref{cor: rate}, with respect to $TK$, is the minimax optimal up to polylog factors \citep{vershynin2018high}. 
The decay of the third term is controlled by environmental heterogeneity when $E>1$. In sharp contrast to the homogeneous settings, larger $E$ significantly slows down the convergence of this term. We show in the next subsection that this slow convergence is fundamental. 
\end{remark}
{\begin{remark}[Communication cost and convergence]
        From Corollary \ref{cor: rate}, by choosing $E = \tilde{\Theta}(\sqrt{T})$, and other parameters are fixed with respect to $T$, we can reach the same error bound of $\tilde\calO(1/\sqrt{T})$ with communication cost of $\tilde{\calO}(\sqrt{T})$, which is better than $\tilde{\calO}(T)$.
    \end{remark}}


\revs{\begin{corollary}
    \label{cor: sample complex}
Choose $E-1\le \frac{1}{\lambda }\min\{\frac{1-\gamma}{4\gamma }, \frac{1}{K}\}$ and $\lambda \le \frac{1}{E}$, and define $x_1 = \frac{4096\log\frac{|\mathcal{S}||\mathcal{A}|K}{\delta}\log^2(\frac{(1-\gamma)^2\epsilon}{8})}{K(1-\gamma)^5\epsilon^2}$,$x_2 = \frac{168\kappa(E-1)\gamma^2 }{(1-\gamma)^3\epsilon}\log^2(\frac{(1-\gamma)^2\epsilon}{12}),$ and $ x_3 = \frac{9216}{K(1-\gamma)^5\epsilon^2}\log\frac{|\mathcal{S}||\mathcal{A}|K}{\delta}\log^2(\frac{(1-\gamma)^2\epsilon}{12}) $,
    \begin{itemize}
        \item When $\kappa=0 \text{ or } E=1,$ for any $\delta \in (0, \frac{1}{3})$, with probability at least $1-3\delta$, it holds that $$\linf{\Delta_T}\leq \epsilon,$$ 
        when 
        $T\geq \exp\{-W_{-1}(-\frac{1}{x_1})\}$,  
        and $\lambda = \frac{\epsilon^2(1-\gamma)^4K}{2304\log\frac{|\calS||\calA|KT}{\delta}}$, $\text{where } W_{-1} \text{ is the Lambert $W$}$  function. The resulting sample complexity is $\tilde\calO\pth{\frac{|\calS||\calA|}{K(1-\gamma)^5\epsilon^2}}$.
        \item When $\kappa>0 \text{ and }E>1,$\begin{itemize}
        \item If $K(E-1)\geq \frac{55\log\frac{|\calS||\calA|K}{\delta}}{\kappa\gamma^2\epsilon(1-\gamma)^2} $, for any $\delta \in (0, \frac{1}{3})$, with probability at least $1-3\delta$, it holds that 

$$\linf{\Delta_T}\leq \epsilon,$$ when 
$T\geq \exp\{-W_{-1}(-\frac{1}{x_2})\}$
and $\lambda= \frac{\epsilon(1-\gamma)^2}{42\kappa(E-1)\gamma^2}$
. The sample complexity is $\tilde\calO\pth{\frac{\kappa|\calS||\calA|E}{(1-\gamma)^3\epsilon}}$.
\item If $K(E-1)\leq  \frac{55\log\frac{|\calS||\calA|K}{\delta}}{\kappa\gamma^2\epsilon(1-\gamma)^2}$, for any $\delta \in (0, \frac{1}{3})$, with probability at least $1-3\delta$, it holds that 

$$\linf{\Delta_T}\leq \epsilon,$$ when 
$T\geq \exp\{-W_{-1}(-\frac{1}{x_3})\}$
and $\lambda = \frac{\epsilon^2(1-\gamma)^4K}{2304\log\frac{|\calS||\calA|KT}{\delta}}$. The sample complexity is $\tilde\calO\pth{\frac{|\calS||\calA|}{K(1-\gamma)^5\epsilon^2}}$.
\end{itemize}
\end{itemize}

\end{corollary}
\begin{remark}[Sample complexity on $K$ and $E$ and conditional linear speedup.] 
        From Corollary \ref{cor: sample complex}, we can conclude that when the setting is homogeneous (\ie, $\kappa = 0$) or the agents communicate every step (\ie, $E=1$), the sample complexity $\tilde\calO\pth{\frac{|\calS||\calA|}{K(1-\gamma)^5\epsilon^2}}$ matches the one in \cite{woo_blessing_2023}. On the other hand, when the setting is heterogeneous (i.e., $\kappa>0$) and $E>1$, it is evident that if the total computation steps per synchronization are sufficiently small, \ie, $K(E-1) \leq \tilde\calO((\kappa\epsilon)^{-1}(1-\gamma)^{-2})$, the sample complexity also matches the one in the homogeneous setting, where there is a linear speedup. Otherwise, the sample complexity $\tilde\calO\pth{\frac{\kappa|\calS||\calA|E}{(1-\gamma)^3\epsilon}}$ 
 increases with $E$, meaning that multiple local rounds only consume more samples (i.e., $E$-times more) samples without achieving linear speedup.
    \end{remark}
}

\subsection{On the Fundamentals of Convergence Slowdown for $E>1$ {in Heterogeneous Environments}.}

\revs{\begin{theorem}
    \label{thm:impossibility}
    Let $Q_0=\bm{0}$.
    For any even $K\ge 2$, there exists a collection of $\{(\calS, \calA, \calP^k, R, \gamma): ~ k\in [K]\}$ where $|\calS|=2$, $|\calA|=1$, and $ R := \begin{bmatrix}
        r_1\\r_2
    \end{bmatrix}$ fixed for all MDPs in the collection, such that, for $E\geq 2$ and time-invariant stepsize $\lambda \le \frac{1}{1+\gamma}$, 
    \[
    \linf{\Delta_{T}} \geq c_R\frac{ E}{(1-\gamma)T},
    \]
    when $T / E \in \naturals$ and $T\geq E \cdot \max\left\{ \exp\{\frac{4E(\gamma+2)}{(1+\gamma)\gamma^2(E-1)}\}, \exp\left\{-W_{-1}\left(-\frac{1-\gamma}{2(1+\gamma)}\right)\right\}\right\}, \text{where } W_{-1} \text{ is the Lambert $W$}$  function, $c_R = \frac{1}{2} \min\left\{\frac{\abth{r_1+r_2}}{e}, \abth{r_1-r_2}  \right\}$ when $r_1\not=r_2$ and $c_R = \frac{\abth{r_1+r_2}}{2e}$ otherwise.
\end{theorem}}


{\bf Proof Sketch.}
Below we provide the proof sketch of \prettyref{thm:impossibility}.  
The full proof is deferred to \prettyref{app: lower bound}.

The eventual slow rate convergence is due to the heterogeneous environments $\calP^k$ regardless of the cardinality of the action space. 
In particular, we prove the slow rate when the action space is a singleton, in which case the Q-function coincides with the V-function. 
The process is also known as the Markov reward process.
According to Algorithm~\ref{algorithm:pfedda}, when $(t+1)\text{ mod } E \ne 0$, we have 
\[
Q_{t+1}^k = \pth{(1-\lambda)I + \lambda \gamma P^k} Q_t^k + \lambda R. 
\]
Following Algorithm~\ref{algorithm:pfedda}, we let \revs{$z$} denote the \revs{$z$}-th synchronization round, and obtain the following recursion between two synchronization rounds:
\begin{equation}
\label{eq:Delta_tE}
\Delta_{(\revs{z}+1)E}
= \bar{A}^{(E)} \Delta_{\revs{z}E} + \pth{ \pth{I- \bar{A}^{(E)} } -   \pth{I+\bar{A}^{(1)}+\dots \bar{A}^{(E-1)}} \pth{I-\bar{A}^{(1)}} } Q^*,
\end{equation}
where $\bar{A}^{(\ell)} \triangleq \frac{1}{K}\sum_{k=1}^K (A^k)^\ell$ and $A^k \triangleq (1-\lambda)I + \lambda \gamma P^k$. 
While the first term on the right-hand side of~\eqref{eq:Delta_tE} decays rapidly to zero, the second term is non-vanishing due to environment heterogeneity for $E\ge 2$. 
Specifically, to ensure the rapid decay of the first term, it is necessary to select a stepsize $\lambda= \tilde\Omega (\frac{1}{\revs{z}E})$.
However, this choice results in the dominating residual error from the second term, which increases linearly with $\lambda E = \tilde\Omega (1/\revs{z})$.

Next, we instantiate the analyses by constructing the set $\calP^k$ over two states and an even number of clients with
\begin{equation}
\label{eq:transition-kernels}
P^{2k-1}=\begin{bmatrix}
    1& 0\\
    0& 1
\end{bmatrix},\quad 
P^{2k}=\begin{bmatrix}
    0& 1\\
    1& 0
\end{bmatrix},\quad \text{for}~k\in\naturals. 
\end{equation}
Applying the formula of $\bar{A}^{(\ell)}$ yields the following eigen-decomposition:
\[
\bar{A}^{(\ell)}= \alpha_\ell (I-\bar P) + \beta_\ell \bar P,
\]
where $\bar P = \frac{1}{2}\mathbf{11}^\top$, $\alpha_\ell \triangleq \frac{1}{2}(\nu_1^\ell + \nu_2^\ell)$, $\beta_\ell \triangleq \nu_2^\ell$, $\nu_1\triangleq 1-(1+\gamma)\lambda$, and $\nu_2\triangleq 1-(1-\gamma)\lambda$.
For this instance of $\calP_k$, the error evolution~\eqref{eq:Delta_tE} reduces to $\Delta_{(\revs{z}+1)E}= \pth{\alpha_E (I-\bar P) + \beta_E \bar P} \Delta_{\revs{z}E}
+ \kappa_E (I-\bar P) Q^*$ with 
$\kappa_E \triangleq - \frac{\gamma}{2}\pth{\frac{1-\nu_2^E}{1-\gamma}- \frac{1-\nu_1^E}{1+\gamma}}$,
which further yields the following full error recursion: 
\[
\Delta_{\revs{z}E}= \pth{\alpha_E^{\revs{z}} (I-\bar P) + \beta_E^{\revs{z}} \bar P} \Delta_{0} + \frac{1-\alpha_E^{\revs{z}}}{1-\alpha_E}\kappa_E (I-\bar P) Q^*.
\]

Starting from $Q_0=\bm{0}$, the error can be decomposed into
\begin{equation}
\label{eq:Delta_tE-zero-start}
\Delta_{\revs{z}E}= \beta_E^z \bar P Q^* + \pth{\alpha_E^{\revs{z}} + \frac{1-\alpha_E^{\revs{z}}}{1-\alpha_E}\kappa_E  } (I-\bar P)Q^*.
\end{equation}
The two terms of the error are orthogonal and both non-vanishing.
Therefore, it remains to lower bound the maximum magnitude of two coefficients irrespective of the stepsize $\lambda$.

{To this end, we analyze two regimes of $\lambda$ separated by a threshold $\lambda_0 \triangleq \frac{\log (T/E)}{(1-\gamma) T}$:}  
\begin{itemize}
\item Slow rate due to small stepsize when $\lambda\le \lambda_0 $. Since $\beta_E^{\revs{z}}$ decreases as $\lambda$ increases,
\[
\beta_E^{\revs{z}} \geq  (1-(1-\gamma)\lambda_0)^{\revs{z}E} = \pth{1-\frac{\log \revs{z}}{\revs{z}E} }^{\revs{z}E}{\geq \frac{E}{eT}}.
\]
\item Slow rate due to environment heterogeneity when $\lambda \ge \lambda_0$. We show that
{\[\left|{\alpha_E^{\revs{z}} + \frac{1-\alpha_E^{\revs{z}}}{1-\alpha_E}\kappa_E }\right|
\ge
\frac{E}{(1-\gamma)T}.\]}
\end{itemize}
We conclude that at least one component of the error in~\eqref{eq:Delta_tE-zero-start} must be slower than the rate $\Omega(E/T)$.

\begin{remark}
The explicit calculations are based on a set $\calP^k$ over a pair of states. 
Nevertheless, the evolution~\eqref{eq:Delta_tE} is generally applicable. 
Similar analyses can be extended to scenarios involving more than two states, provided that the sequence of matrices $\bar{A}^{(\ell)}$ is simultaneously diagonalizable. 
For instance, the construction of the transition kernels in~\eqref{eq:transition-kernels} can be readily extended to multiple states if the set $\calS$ can be partitioned into two different classes.
The key insight is the non-vanishing residual on the right-hand side of~\eqref{eq:Delta_tE} when $E\ge 2$ due to the environment heterogeneity.
\end{remark}

\subsection{Discussion on Time-varying Stepsize}

Although using time-varying stepsize is common and simple when implementing the algorithm, it is not easy to transfer from current time-invariant stepsize analysis to time-varying stepsize analysis. This is because in the time-invariant stepsize analysis we are dealing with a function of one variable, however, in the time-varying case, we are dealing with a function of $T$ variables.

For example, in our lower bound analysis, we picked a threshold $\lambda_0$ and showed that no matter $\lambda$ is greater or smaller than $\lambda_0$, the convergence rate is greater than {$\Theta_R\pth{E/((1-\gamma)T)}$}, and we can claim we have covered all the cases. However, for time-varying stepsize, the number of stepsizes is $T$, and it is not easy to generalize a similar result by just considering several cases because each stepsize gives an additional dimension. Even if we know the sequence is decaying, without specifying a particular family of stepsizes, it is not possible to divide it into several cases as we did for time-invariant stepsize.

We conjecture that both approaches lead to comparable residual error levels over extended training. For example, the stepsizes used in Figure \ref{fig:two-phase a} and Figure \ref{fig:two-phase b} are $\frac{1}{\sqrt{T}}$ and $\frac{1}{\sqrt{t+1}}$, respectively. While the time-decaying stepsize appears to have faster initial convergence due to its larger values, we observe that as \( t \) increases, the convergence rates of the two strategies seem to align, suggesting a similar asymptotic behavior.





\vspace*{-\baselineskip}
\section{Experiments}
\vspace*{-.5\baselineskip}

%
{\bf Description of the setup.} 
%
In our experiments, we consider $K=20$ agents \citep{jin_federated_2022}, each interacting with an independently and randomly generated \(5\times5\) maze environment $\langle \calS, \calA, \calP^k, R, \gamma\rangle$ for $k\in \{1,2,\cdots, 20\}$.   
The state set $\calS$ contains $25$ cells that the agent is currently in.  
The action set contains 4 actions $\calA = \{\text{left}, \text{up}, \text{right}, \text{down}\}$. Thus, $|\calS|\times |\calA| =100$. 
We choose $\gamma = 0.99$. 
For ease of verifying our theory, each entry of the reward \(R\in \reals^{100}\) is 
sampled from $\Bern(p=0.05)$, which slightly departs from a typical maze environment wherein only two state-action pairs have nonzero rewards. 
We choose this reward so that $\linf{\Delta_0}\approx 100 = \frac{1}{1-\gamma}$, which is the coarse upper bound of $\linf{\Delta_t}$ for all $t$.  
For each agent $k$, its state transition probability vectors $\calP^k$ are constructed on top of standard state transition probability vectors of maze environments incorporated with a drifting probability $0.1$ in each non-intentional action as in \texttt{WindyCliff} \citep{jin_federated_2022, paul2019fingerprint}. In this way, the environment heterogeneity lies not only in the differences of the non-zero probability values \citep{jin_federated_2022,paul2019fingerprint} but also in the probability supports (i.e., the locations of non-zero entries).  
Our construction is more challenging: The environment heterogeneity \(\kappa\) as per (\ref{eq: transition heterogeneity}) of our environment construction was calculated to be $1.2$. Yet, the largest environment heterogeneity of the \texttt{WindyCliff} construction in \citet{jin_federated_2022} is about 0.31.

%
%
%
%

We choose \(Q_0 = \bold{0}\in \reals^{100}\). 
All numerical results are based on 5 independent runs to capture the variability. The dark lines represent the mean of the runs, while the shaded areas around each line illustrate the range obtained by adding and subtracting one standard deviation from the mean. The maximum time duration is $T=20,000$ in our experiment since it is sufficient to capture the characteristics of the training process.

\paragraph{Convergence behavior and two-phase phenomenon.}

We demonstrate through numerical simulations that our analysis aligns with the observed behaviors. For algorithms with a time-invariant stepsize, convergence requires sufficiently small stepsizes and a sufficiently large number of iterations $T$.

To explore the impact of stepsizes on convergence, we use $\lambda \in \{0.9, 0.5, 0.2, 0.1, 0.05\}$, spanning a range within $(0,1)$. As shown in Figure \ref{fig:two-phase-a}, these stepsizes are not sufficiently small, leading to a two-phase phenomenon: the $\ell_{\infty}$-norm of $\Delta_t = Q^* - \bar{Q}_t $ has a rapid decay in the first phase followed by a bounce back in the second phase. This phenomenon is distinctive to heterogeneous settings. In contrast, Figure \ref{fig:two-phase-b} indicates that in homogeneous environments, no drastic bounce occurs, irrespective of the stepsize. \revs{Note that if multiple plots on the same page share the same legend, we display the legend only once for clarity.}

Figure \ref{fig:two-phase a} (light blue curve) demonstrates that with a sufficiently small stepsize, such as $\lambda = \frac{1}{\sqrt{T}}$, the error continuously decreases, reaching approximately 24 by iteration 20,000.

A useful practice implication of our results is that: 
While constant stepsizes are often used in reinforcement learning problems because of the great performance in applications as described in \citet{sutton2018_chap2}, 
they suffer significant performance degradation in the presence of environmental heterogeneity. 

\begin{figure}[h]
  \begin{subfigure}{0.475\textwidth}
    \includegraphics[width=\textwidth]{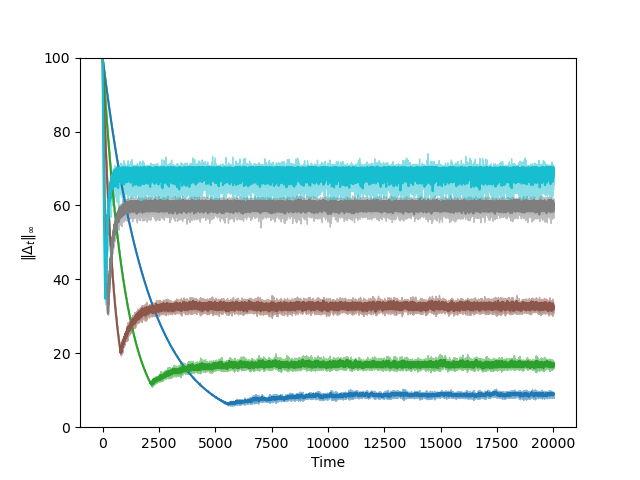} 
    \caption{Heterogeneous environments \(E=10\).}
    \label{fig:two-phase-a}
  \end{subfigure}
  \hfill
  \begin{subfigure}{0.475\textwidth}
    \includegraphics[width=\textwidth]
    {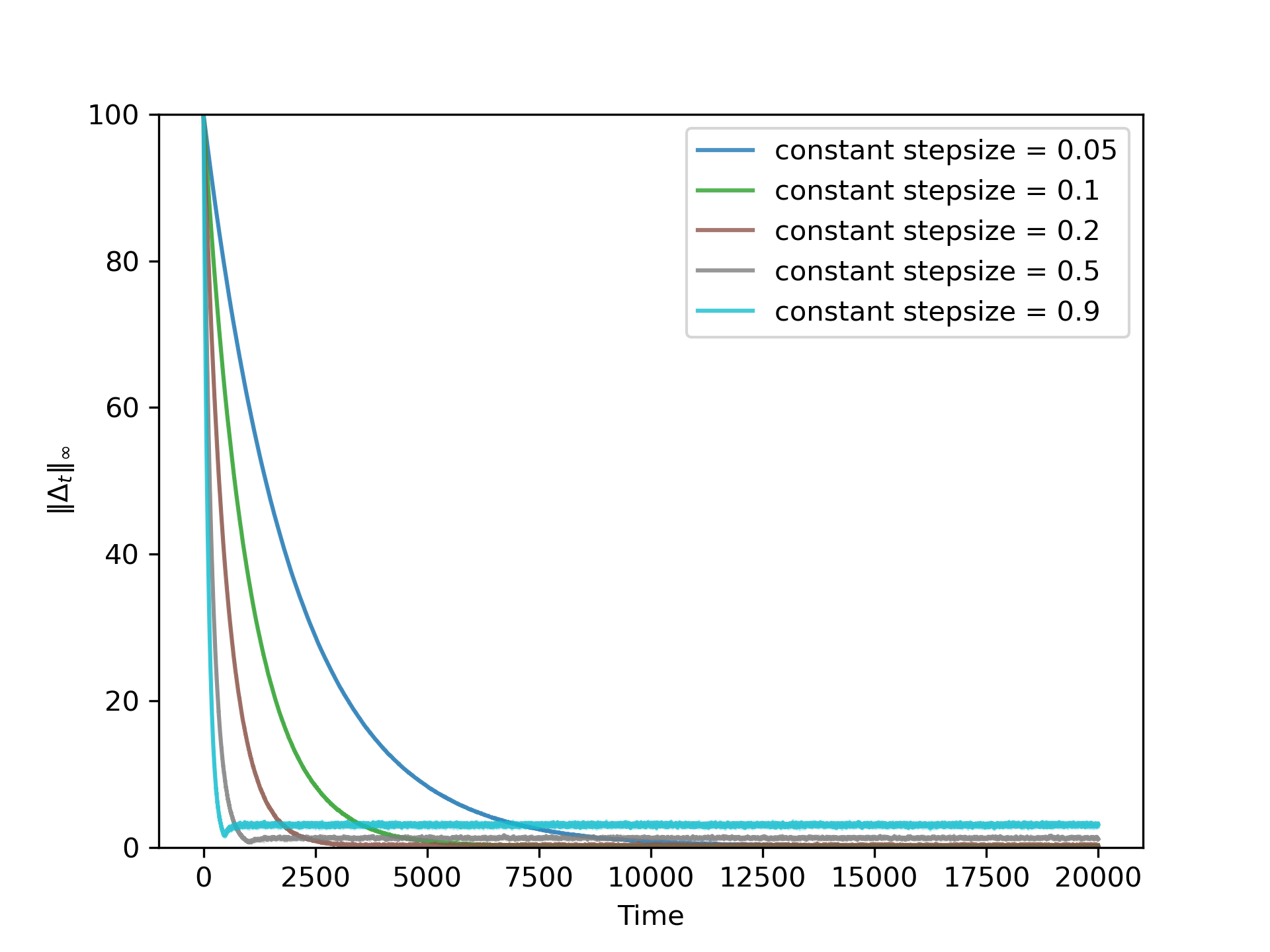}
    \caption{Homogeneous environments \(E=10\).}
    \label{fig:two-phase-b}
  \end{subfigure} 
  \caption{The \(\ell_{\infty}\) error of different constant stepsizes under the heterogeneous and homogenous settings. }
  \label{fig: homo_vs_hetero}
\end{figure}

\vspace{0.5em}

\paragraph{Impacts of the synchronization period $E$.}

In homogeneous settings (refer to Figure \ref{fig:homo_sync_int} in Appendix \ref{app: exp: homo: E}), the synchronization period $E$ has negligible impact, consistent with prior findings in the literature \citep{woo_blessing_2023,khodadadian2022federated}. However, under heterogeneous conditions, larger $E$ values lead to increased final error across the five constant stepsizes, as depicted in Figure \ref{fig:sync_int} and Figure \ref{fig:two-phase-a}. This degradation persists even with time-decaying stepsizes $\lambda_t = \frac{1}{\sqrt{t+1}}$, as shown in Figure \ref{fig:diffE}. We hypothesize that larger $E$ values require either smaller or more rapidly decaying stepsizes to mitigate the degradation caused by increased synchronization periods.

\paragraph{Potential utilization of the two-phase phenomenon.}
As shown in Figures \ref{fig:two-phase-a} and \ref{fig:sync_int}, in the presence of environmental heterogeneity, the smaller the stepsizes, the smaller error $\linf{\Delta_t}$ can reach and less significant of the error bouncing in the second phase. In our preliminary experiments, we tested small stepsizes $\lambda = 1/T^{\alpha}$ for $\alpha \in \{0.4, 0.5, \cdots, 1\}$, which eventually lead to small errors yet at the cost of being extremely slow. Among these choices, $\lambda = 1/\sqrt{T}$ has the fastest convergence performance yet is still $\approx 24$ up to iteration 20,000.

\begin{figure}[H]
\vspace{-1.8em}
  \begin{subfigure}[t]{0.475\textwidth}
    \includegraphics[width=\textwidth]{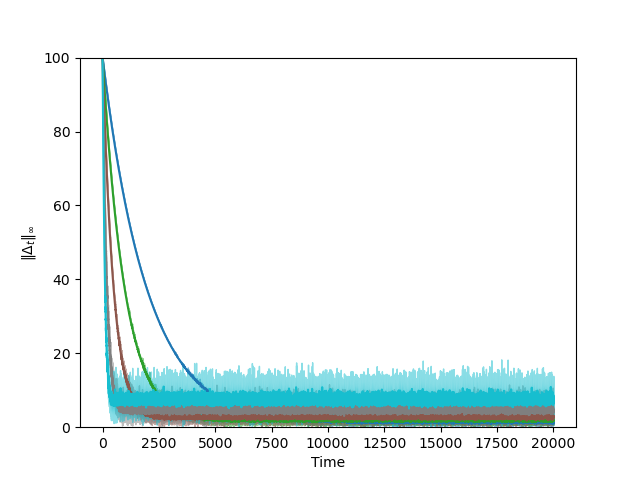}
    \caption{E=1}
    \label{fig-a}
  \end{subfigure}
  \hfill
  \begin{subfigure}[t]{0.475\textwidth}
    \includegraphics[width=\textwidth]{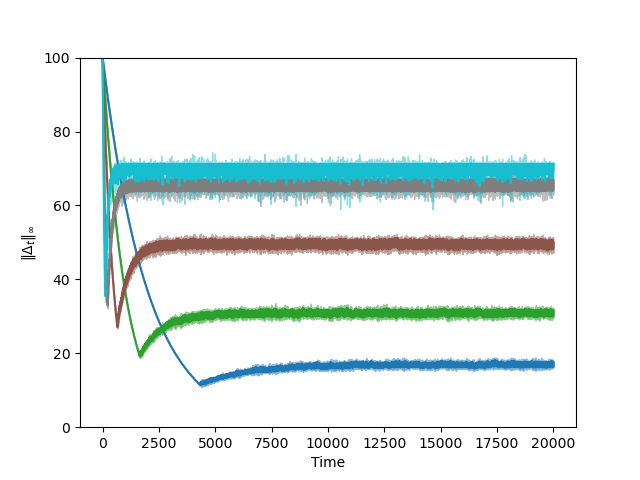}
    \caption{E=20}
    \label{fig-a}
  \end{subfigure}
\vspace{-1em}
  \begin{subfigure}[t]{0.475\textwidth}
    \includegraphics[width=\textwidth]{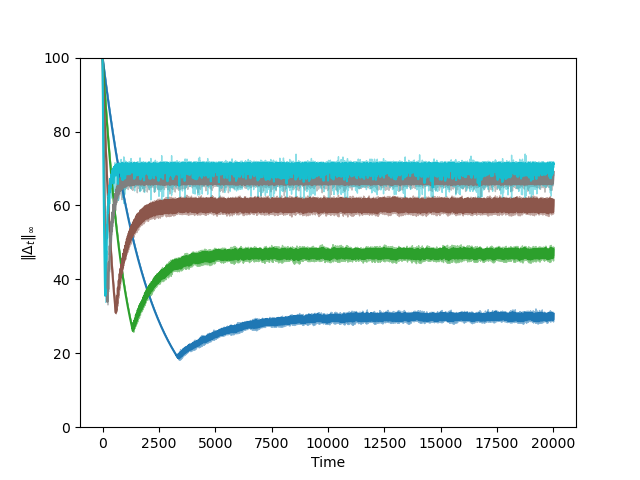}
    \caption{E=40}
    \label{fig-b}
  \end{subfigure}
   \hfill
  \begin{subfigure}[t]{0.475\textwidth}
    \includegraphics[width=\textwidth]{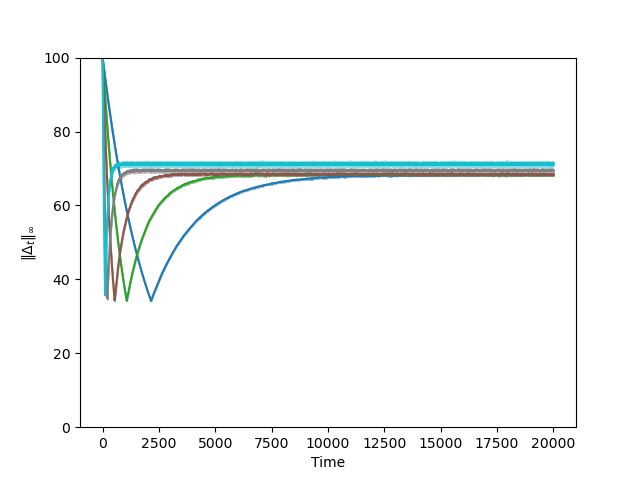}
    \caption{E=\(\infty\)}
    \label{fig-b}
  \end{subfigure}

  \caption{{
  Convergence behavior for constant stepsizes $(0.05,0.1,0.2,0.5,0.9)$ under various synchronization intervals $E$ $(1, 20, 40, \infty)$. In heterogeneous settings, higher $E$ and larger $\lambda$ lead to higher residual errors.}
}
\vspace{-0.5em}
  \label{fig:sync_int}
\end{figure}



Let $t_0$ be the iteration at which the error trajectory $\linf{\Delta_t}$ switches from phase 1 to phase 2. 
Provided that $t_0$ can be estimated, choosing different stepsizes for the two phases can lead to faster overall convergence, compared with using the same stepsize throughout. 

Figure \ref{fig:two-phase a} illustrates two-phase training with different phase 1 stepsizes and phase 2 stepsize $\lambda = 1/\sqrt{T}$ compared with using $\lambda = 1/\sqrt{T}$ throughout. 
Overall, using $\lambda = 1/\sqrt{T}$ throughout leads to the slowest convergence, highlighting the benefits of the two-phase training strategy. 
Among all two-phase stepsize choices,  the stepsize of 0.05 in the first phase results in a longer phase 1 duration (\(t_0 = 5550\)) but the lowest final error (2.75327), suggesting a better convergence. We further test the convergence performance with respect to different target error levels, details can be found in Appendix \ref{app: two-phase: stepsizes}. 

{We also evaluated the two-phase training strategy using various time-decaying step sizes, including $\frac{1}{\sqrt{t+1}}$, $\frac{c+1}{t+c}$, $\frac{1}{t+1}$, and $\frac{1}{(t+1)^{0.7}}$. In all cases, Figure \ref{fig:two-phase} shows the two-phase training has an advantage.
}

We leave the estimation and characterization of $t_0$ for future work.

%

\label{sec:experiments}



\section*{Acknowledgments}
We thank He Cheng for his valuable assistance on  the experiments.
P.\ Yang is supported in part by the National Key R\&D Program of China 2024YFA1015800, and Tsinghua University Dushi Program 2025Z11DSZ001. 
L.\ Su is supported in part by NSF CAREER CIF-2340482. 

\bibliography{main}
\bibliographystyle{tmlr}


\appendix

\newpage 
\appendix 
\begin{center}
    \Large Appendices 
\end{center}

\section{Proof of Lemma \ref{lm: error iteration}}

The evolution of $\Delta_{t+1}$ can be decomposed as follows:  
\begin{align*}
	\Delta_{t+1}&=Q^*-\bar Q_{t+1} \\
 &\overset{(a)}{=} \frac{1}{K}\sum_{k=1}^K(Q^* - ((1-\lambda)Q_t^k+\lambda(R+\gamma \tilde P_t^kV_t^k)))\\
	& = \frac{1}{K}\sum_{k=1}^K((1 - \lambda)(Q^*-Q_t^k) + \lambda(Q^*- R-\gamma \tilde P_t^k V_t^k))\\
	& \overset{(b)}{=} (1 - \lambda)\Delta_{t} + \gamma \lambda\frac{1}{K}\sum_{k=1}^K(\bar PV^*- \tilde P_t^k V_t^k)\\
	& = (1 - \lambda)\Delta_{t} + \frac{\gamma \lambda}{K}\sum_{k=1}^K (\bar P - \tilde P_t^k )V^*+\frac{\gamma \lambda}{K}\sum_{k=1}^K \tilde P_t^k (V^* - V_t^k) \\
        &=(1-\lambda)^{t+1}\Delta_0 +  \gamma\lambda\sum_{i=0}^t (1-\lambda)^{t-i}\frac{1}{K}\sum_{k=1}^K (\bar P - \tilde P_i^k )V^* \\
&\quad + \gamma\lambda\sum_{i=0}^t (1-\lambda)^{t-i}\frac{1}{K} \sum_{k=1}^K\tilde P_i^k(V^*-V_i^k),  
\end{align*}
where equality (a) follows from Eq.\,\eqref{eq: average Q update}, equality (b) follows from the Bellman optimality equation in Eq.\,\eqref{eq: bellman global environment}, and the last equality follows from unrolling the updates $t+1$ times and using the fact that $\Delta_0 = Q^* - Q_0$.

\section{Proof of Lemma \ref{lm: coarse bound}}
We first show $0 \leq Q_t^k(s,a) \leq \frac{1}{1-\gamma}$ by inducting on $t$. 
When $t=0$, this is true by the choice of $Q_0$. 
%
Suppose that $0 \leq Q_{t-1}^k(s,a) \leq \frac{1}{1-\gamma}$ for any state-action pair $(s,a)$ and any client $k$. 
Let's focus on time $t$. When $t$ is not a synchronization iteration (i.e., {$t+1\mod E \not=0$}), we have 
\begin{align*}
    Q_t^k (s,a) &= (1-\lambda)Q_{t-1}^k (s,a)+\lambda(R(s,a)+\gamma \tilde P_t^k(s,a) V_{t-1}^k) \\
    & \leq \frac{1-\lambda}{1-\gamma}+\lambda (R(s,a)+\gamma \tilde P_t^k(s,a) V_{t-1}^k)\\
    & \stackrel{(a)}{\leq} \frac{1-\lambda}{1-\gamma}+\lambda (1+\frac{\gamma}{1-\gamma})\\
    & \leq \frac{1}{1-\gamma}-\frac{\lambda}{1-\gamma}+\frac{\lambda }{1-\gamma}\\
    &=\frac{1}{1-\gamma}, 
\end{align*}
where inequality (a) holds because for any $s, V_{t-1}^k(s) = \max_{a \in \calA} Q_{t-1}^k(s,a)\leq \frac{1}{1-\gamma}$ by the inductive hypothesis, {and each element of $\tilde{P}_t^k(s,a) \in [0,1]$. Then $\tilde{P}_t^k(s,a)V_{t-1}^k\leq \|\tilde{P}_t^k(s,a)\|_1\linf{V_{t-1}^k}\leq \frac{1}{1-\gamma}$ by  H\"older's inequality.} Similarly, we can show the case when $t$ is a synchronization iteration.

With the above argument, we can also show that 
$0\le Q^*(s,a)\le \frac{1}{1-\gamma}$ for any state-action pair $(s,a)$.  Therefore, we have that $\linf{Q^*-Q_t^k} \leq \frac{1}{1-\gamma}$. 

Next, we show that bound on $\linf{V^*- V_t^k}$. 
\begin{align*}
    \linf{V^*- V_t^k} &=\max_{s \in \calS} \abth{V^*(s) - V_t^k(s)} \\
    &= \max_{s \in \calS} \abth{ \max_{a \in \calA}Q^*(s,a) - \max_{a'\in \calA}Q_t^k(s,a')}\\
    &\le \max_{s \in \calS,a \in \calA} \abth{Q^*(s,a) - Q_t^k(s,a)}\\
    &= \linf{Q^*-Q_t^k} \\
    &\leq \frac{1}{1-\gamma}. 
\end{align*}

\section{Proof of Lemma \ref{lm: sample complexity: stepsize push}}
%
When $t\mod E = 0$, i.e., $i$ is a synchronization round, $Q_t^k = Q_t^{k^{\prime}}$ for any pair of agents $k, k^{\prime}\in [K]$. Hence,  
\begin{align}
\label{eq: E22 synchornization bound}
\nonumber
\frac{1}{K}\sum_{k=1}^K \tilde P_t^k(s,a)(V^*-V_t^k)
& = \pth{\frac{1}{K}\sum_{k=1}^K \tilde P_t^k(s,a)}(V^*-\bar V_t)\\
\nonumber
& \le  \|\frac{1}{K}\sum_{k=1}^K \tilde P_t^k(s,a)\|_1 \linf{V^* - \bar V_t}\\
\nonumber 
&\le \linf{V^* - \bar V_t}\\
\nonumber 
& \leq \linf{Q^* - \bar Q_t} \\
& = \linf{\Delta_t}.  
\end{align}
For general $t$, we anchor the error term to that of the synchronization round as follows:
\begin{align}
\label{eq: E22 general round}
\nonumber  
    \linf{\frac{1}{K}\sum_{k=1}^K \tilde{P}_t^k(V^*-V^k_t)} 
    &= \linf{\frac{1}{K}\sum_{k=1}^K \tilde{P}_t^k(V^*-V_{\chi(t)}^k + V_{\chi(t)}^k - V^k_i)}\\
\nonumber  
    &\leq \linf{\frac{1}{K}\sum_{k=1}^K \tilde{P}_t^k(V^*-V_{\chi(t)}^k)}+ \linf{\frac{1}{K}\sum_{k=1}^K \tilde{P}_t^k( V_{\chi(t)}^k - V^k_t)} \\
    \nonumber 
    & \overset{(a)}{\le} \linf{\Delta_{\chi(t)}} + \linf{\frac{1}{K}\sum_{k=1}^K \tilde{P}_t^k( V_{\chi(t)}^k - V^k_t)} \\
    & \le \linf{\Delta_{\chi(t)}} + \frac{1}{K}\sum_{k=1}^K \linf{V_{\chi(t)}^k - V^k_t}. 
\end{align}
where inequality (a) follows from Eq.\,\eqref{eq: E22 synchornization bound}. 
For any state $s\in \calS$, let $a_t^k(s) \in \arg\max_{a\in \calA} Q_t^k(s,a)$ for all $t$ and $k$.  
We have  
\begin{align}
\label{eq: perturbation within a round: sample}
\nonumber 
&V^k_t(s) - V_{\chi(t)}^k(s)\\ 
\nonumber 
&= Q^k_t(s, a_t^k(s)) -  Q_{\chi(t)}^k(s, a_{\chi(t)}^k(s)) \\
\nonumber
& \overset{(a)}{\le} Q^k_t(s, a_t^k(s)) -  Q_{\chi(t)}^k(s, a_t^k(s)) \\
\nonumber 
& =  Q^k_t(s, a_t^k(s)) -  Q^k_{t-1}(s, a_t^k(s)) + Q^k_{t-1}(s, a_t^k(s)) - Q^k_{t-2}(s, a_t^k(s)) \\ 
& \quad + \cdots  
 + Q^k_{\chi(t)+1}(s, a_t^k(s)) -  Q_{\chi(t)}^k(s, a_t^k(s)).  
\end{align}
where inequality (a) holds because \(Q_{\chi(t)}^k(s, a_t^k(s)) \leq  Q_{\chi(t)}^k(s, a_{\chi(t)}^k(s))\).
%
For each $t^{\prime}$ such that $\chi(t) \le t^{\prime}\le t$, it holds that,  
\begin{align*}
&Q^k_{t^{\prime}+1}(s, a_t^k(s)) -  Q_{t^{\prime}}^k(s, a_t^k(s))  \\ 
& = (1-\lambda) Q^k_{t^{\prime}}(s, a_t^k(s)) + \lambda (R(s, a_t^k(s)) + \gamma \tilde{P}_{t^{\prime}}^k(s, a_t^k(s))V_{t^{\prime}}^k)-  Q_{t^{\prime}}^k(s, a_t^k(s)) \\
& \overset{(a)}{=} -\lambda Q^k_{t^{\prime}}(s, a_t^k(s)) 
+ \lambda \pth{ Q^*(s, a_t^k(s)) - R(s, a_t^k(s)) - \gamma \bar{P}(s, a_t^k(s)) V^* + R(s, a_t^k(s)) + \gamma \tilde{P}_{t^{\prime}}^k(s, a_t^k(s))V_{t^{\prime}}^k} \\
& = \lambda \Delta_{t^{\prime}}^k(s, a_t^k(s))
 + \gamma \lambda \pth{\tilde{P}_{t^{\prime}}^k(s, a_t^k(s))V_{t^{\prime}}^k - \bar{P}(s, a_t^k(s)) V^*}\\
& = \lambda \Delta_{t^{\prime}}^k(s, a_t^k(s))
 + \gamma \lambda \pth{(\tilde{P}_{t^{\prime}}^k(s, a_t^k(s)) - \bar{P}(s, a_t^k(s)))V^* + \tilde{P}_{t^{\prime}}^k(s, a_t^k(s)) (V_{t^{\prime}}^k- V^*)} \\
& \le 2 \lambda \linf{\Delta_{t^{\prime}}^k} + \gamma \lambda \pth{\tilde{P}_{t^{\prime}}^k(s, a_t^k(s)) - \bar{P}(s, a_t^k(s))}V^*, 
\end{align*}
where equality (a) follows from the Bellman equation Eq.\,\eqref{eq: bellman global environment}, and the last inequality follows from Eq.\eqref{eq: E22 synchornization bound} and that  
\begin{align*}
\gamma \lambda  \tilde{P}_{t^{\prime}}^k(s, a_t^k(s)) (V_{t^{\prime}}^k- V^*)  
\le \gamma  \lambda  \|\tilde{P}_{t^{\prime}}^k(s, a_t^k(s))\|_1 \|V_{t^{\prime}}^k- V^*\|_{\infty} 
\le \gamma\|V_{t^{\prime}}^k- V^*\|_{\infty}.
\end{align*}
Thus, $V^k_t(s) - V_{\chi(t)}^k(s)$ can be upper bounded as 
\begin{align} 
\label{eq: upper bound: perturbation: within a round: sample}
\nonumber 
V^k_t(s) - V_{\chi(t)}^k(s)
&\le \sum_{t^{\prime} = \chi(t)}^{t-1} Q^k_{t^{\prime}+1}(s, a_t^k(s)) -  Q_{t^{\prime}}^k(s, a_t^k(s))\\
& = 2\lambda \sum_{t^{\prime} = \chi(t)}^{t-1} \linf{\Delta_{t^{\prime}}^k}
+ \gamma \lambda \sum_{t^{\prime} = \chi(t)}^{t-1} \pth{\tilde{P}_{t^{\prime}}^k(s, a_t^k(s)) - \bar{P}(s, a_t^k(s))}V^*. 
\end{align}
Similarly, we have the following lower bound:
\begin{align} 
\label{eq: lower bound: perturbation: within a round: sample}
\nonumber 
V^k_t(s) - V_{\chi(t)}^k(s)
&\ge \sum_{t^{\prime} = \chi(t)}^{t-1} Q^k_{t^{\prime}+1}(s, a_{\chi(t)}^k(s)) -  Q_{t^{\prime}}^k(s, a_{\chi(t)}^k(s))\\
& \ge - 2\lambda \sum_{t^{\prime} = \chi(t)}^{t-1} \linf{\Delta_{t^{\prime}}^k}
+ \gamma \lambda \sum_{t^{\prime} = \chi(t)}^{t-1} \pth{\tilde{P}_{t^{\prime}}^k(s, a_{\chi(t)}^k(s)) - \bar{P}(s, a_{\chi(t)}^k(s))}V^*. 
\end{align}
Plugging the bounds in Eq.\,\eqref{eq: upper bound: perturbation: within a round: sample} and in Eq.\,\eqref{eq: lower bound: perturbation: within a round: sample} back into Eq.\,\eqref{eq: E22 general round}, we get 
\begin{align*}
\linf{\frac{1}{K}\sum_{k=1}^K \tilde{P}_t^k(V^*-V^k_t)} 
& \le \linf{\Delta_{\chi(t)}} + \frac{1}{K}\sum_{k=1}^K \linf{V_{\chi(t)}^k - V^k_t}\\
& \le \linf{\Delta_{\chi(t)}} + 2\lambda \frac{1}{K}\sum_{k=1}^K\sum_{t^{\prime} = \chi(t)}^{t-1} \linf{\Delta_{t^{\prime}}^k} \\
& \quad +  \gamma \lambda \frac{1}{K}\sum_{k=1}^K \max_{s, a} {\abth{\sum_{t^{\prime} = \chi(t)}^{t-1} \pth{\tilde{P}_{t^{\prime}}^k(s, a) - \bar{P}(s, a) }V^*}}, 
\end{align*}
proving the lemma.

\section{Proof of Lemma \ref{lm: sample complexity: stepsize push: alternative}}
When $i\mod E=0$, then $\Delta_{i}^k = \Delta_{\chi(i)}$. 
When $i\mod E\not=0$, we have 
\begin{align*}
Q_{i}^k &= (1-\lambda)Q_{i-1}^k+
\lambda \pth{R + \gamma \tilde{P}_{i-1}^k V_{i-1}^k} \\
& = (1-\lambda)Q_{i-1}^k+
\lambda \pth{Q^* - R - \gamma \bar{P} V^* + R + \gamma \tilde{P}_{i-1}^k V_{i-1}^k}. 
\end{align*}
So, 
\begin{align} 
\label{eq: error iterate within a round}
\nonumber
\Delta_{i}^k 
& = (1-\lambda)\Delta_{i-1}^k+
\lambda \gamma\pth{\bar{P} V^* - \tilde{P}_{i-1}^k V_{i-1}^k} \\
\nonumber 
& = (1-\lambda)\Delta_{i-1}^k+
\lambda \gamma(\bar{P} - \tilde{P}_{i-1}^k) V^* + \lambda \gamma\tilde{P}_{i-1}^k (V^* - V_{i-1}^k) \\
\nonumber 
& \le (1-\lambda)^{i-\chi(i)}\Delta_{\chi(i)}+ \gamma \lambda \sum_{j=\chi(i)}^{i-1} (1-\lambda)^{i-j-1}(\bar{P} - \tilde{P}_j^k)V^* \\
& \quad + \gamma \lambda \sum_{j=\chi(i)}^{i-1} (1-\lambda)^{i-j-1} \tilde{P}_j^k (V^* - V_j^k). 
\end{align}  
For any state-action pair $(s,a)$, 
\begin{align}
\label{eq: error: within a round: term 1}
|(1-\lambda)^{i-\chi(i)}\Delta_{\chi(i)}(s,a)|\le (1-\lambda)^{i-\chi(i)} \linf{\Delta_{\chi(i)}}. 
\end{align}
{For the second term, we have
\begin{align}
\label{eq: error: within a round: term 2}
&\linf{\gamma \lambda \sum_{j=\chi(i)}^{i-1} (1-\lambda)^{i-j-1}(\bar{P} - \tilde{P}_j^k)V^*} \\
\label{eq: error: within a round: term2: decomposed}
\leq& \linf{\gamma \lambda \sum_{j=\chi(i)}^{i-1} (1-\lambda)^{i-j-1}(\bar{P} - {P}_j^k)V^*}+\linf{\gamma \lambda \sum_{j=\chi(i)}^{i-1} (1-\lambda)^{i-j-1}({P}_j^k - \tilde{P}_j^k)V^*} \\
\label{eq: error: within a round: term2: hoeff}
\le& \frac{\gamma}{1-\gamma}\lambda \sum_{j=\chi(i)}^{i-1} (1-\lambda)^{i-1-j} \kappa  
 + \frac{\gamma}{1-\gamma}\sqrt{\lambda \log \frac{|\calS||\calA|KT}{\delta}} \\
  \le& \frac{\gamma}{1-\gamma}\lambda (E-1) \kappa + \frac{\gamma}{1-\gamma}\sqrt{\lambda \log \frac{|\calS||\calA|KT}{\delta}}, 
\end{align} 
for all $(s, a)\in \calS\times \calA, i\in [T], k\in [K]$. From Eq.\,\eqref{eq: error: within a round: term2: decomposed} to Eq.\,\eqref{eq: error: within a round: term2: hoeff}, {since for each timestep, each agent independently samples the next state for all state-action pairs to form the sample transition matrix, we can use Hoeffding's inequality by treating $\lambda(1-\lambda)^{i-j-1}({P}_j^k - \tilde{P}_j^k)V^*$ as the independent random variables with their absolute values bounded by $\lambda(1-\lambda)^{i-j-1}\linf{V^*}$.}}

In addition, we have 
\begin{align}
\label{eq: error: within a round: term 3}
{\linf{\gamma \lambda \sum_{j=\chi(i)}^{i-1} (1-\lambda)^{i-j-1} \tilde{P}_j^k (V^* - V_j^k)}}
\le \gamma \lambda \sum_{j=\chi(i)}^{i-1} (1-\lambda)^{i-j-1} \linf{\Delta_j^k}. 
\end{align} 
Combining the bounds in Eq.\,\eqref{eq: error: within a round: term 1}, Eq.\,\eqref{eq: error: within a round: term 2}, and Eq.\,\eqref{eq: error: within a round: term 3}, we get 
\begin{align}
\linf{\Delta_{i}^k}
\label{eq: coarse bound on local error}
\nonumber 
&\le  (1-\lambda)^{i-\chi(i)} \linf{\Delta_{\chi(i)}} 
+ \frac{\gamma}{1-\gamma}\lambda (E-1) \kappa + \frac{\gamma}{1-\gamma}\sqrt{\lambda \log \frac{|\calS||\calA|KT}{\delta}} \\
\nonumber
& \quad + \gamma\lambda \sum_{j=\chi(i)}^{i-1} (1-\lambda)^{i-j-1} \linf{\Delta_j^k}\\
\nonumber 
& \le {\pth{1-(1-\gamma)\lambda}^{i-\chi(i)}\linf{\Delta_{\chi(i)}}} \\
& \quad + \pth{1+ \gamma \lambda}^{i-\chi(i)}\pth{\frac{\gamma}{1-\gamma}\lambda (E-1)\kappa + \frac{\gamma}{1-\gamma}\sqrt{\lambda \log \frac{|\calS||\calA|KT}{\delta}}}, 
\end{align}
where the last inequality can be shown via inducting on $i-\chi(i) \in \{0, \cdots, E-1\}$. 
When $\lambda\le \frac{1}{E}$,  
\begin{align*}
\pth{1+ \gamma \lambda}^{i-\chi(i)}
\le \pth{1+ \lambda}^{E} \le \pth{1+ 1/E}^{E} \le e \le 3. 
\end{align*}
We get 
\begin{align*}
\linf{\Delta_{i}^k}
\le \linf{\Delta_{\chi(i)}} + 3\frac{\gamma}{1-\gamma}\lambda (E-1)\kappa + 3\frac{\gamma}{1-\gamma}\sqrt{\lambda \log \frac{|\calS||\calA|KT}{\delta}}. 
\end{align*}

\section{Proof of Theorem \ref{thm: sample complexity: synchronous}}
%
%
By Lemma \ref{lm: error iteration}, 
    \begin{align*}
       \Delta_{t+1}&=(1-\lambda)^{t+1}\Delta_0 + \sum_{i=0}^t(1-\lambda)^i\frac{\gamma\lambda}{K}\sum_{k=1}^K (\bar P-\tilde P_{t-i}^k)V^*+\sum_{i=0}^t(1-\lambda)^i\frac{\gamma\lambda}{K}\sum_{k=1}^K\tilde P_{t-i}^k(V^*-V_{t-i}^k). 
    \end{align*}

Taking the $\ell_{\infty}$ norm on both sides, we get 
\begin{align}
\label{eq: error}
\nonumber
\linf{\Delta_{t+1}}  
&\leq 
\underbrace{(1-\lambda)^{t+1}\linf{\Delta_0}}_{\textnormal{(I.1)}}
+ \underbrace{\linf{\sum_{i=0}^t(1-\lambda)^i\lambda \gamma\frac{1}{K}\sum_{k=1}^K (\bar P-\tilde P_{t-i}^k)V^*}}_{\textnormal{(I.2)}} \\
& \quad +\underbrace{\sum_{i=0}^t(1-\lambda)^i\lambda \gamma \linf{\frac{1}{K}\sum_{k=1}^K\tilde P_{t-i}^k(V^*-V_{t-i}^k)}}_{\textnormal{(I.3)}}. 
\end{align}
We bound the three terms in the right-hand-side of the above-displayed equation separately. 

\noindent {\bf Bounding (I.1).}
Since $0\le Q_0(s,a)\le \frac{1}{1-\gamma}$, the first term can be bounded as 
\begin{align}
\label{eq: bound: term 1}
\textnormal{(I.1)} = (1-\lambda)^{t+1}\linf{\Delta_0} \le  (1-\lambda)^{t+1}\frac{1}{1-\gamma}. 
\end{align}

\noindent {\bf Bounding (I.2).}
To bound the second term \textnormal{(I.2)} in Eq.\,\eqref{eq: error}, 
we have 
\begin{align*}
\sum_{i=0}^t(1-\lambda)^i\lambda \gamma\frac{1}{K}\sum_{k=1}^K (\bar P-\tilde P_{t-i}^k)V^* 
& = \sum_{i=0}^t(1-\lambda)^i\lambda \gamma\frac{1}{K}\sum_{k=1}^K (P^k-\tilde P_{t-i}^k)V^*\\
& = \frac{1}{K}\sum_{k=1}^K \sum_{i=0}^t(1-\lambda)^i\lambda \gamma (P^k-\tilde P_{t-i}^k)V^*. 
\end{align*}
Let {$X_{i,k}=\frac{1}{K}\gamma \lambda (1-\lambda)^i (P^k - \tilde{P}_{t-i}^k)V^*$}. It is easy to see that $\expect{X_{i,k}(s,a)} = 0$ for all $(s,a)$. 
By Lemma \ref{lm: coarse bound}, we have ${\abth{X_{i,k}(s,a)}} \le \frac{2}{K(1-\gamma)}\gamma \lambda (1-\lambda)^i$ for all $(s,a)$. 
Since the sampling across clients and across iterations are independent, via invoking Hoeffding's inequality, for any given $\delta\in (0,1)$, with probability at least $1-\delta$, 
\begin{align}
\label{eq: bound: sample: time-invariant: term 2}
\textnormal{(I.2)} 
= \linf{\sum_{i=0}^t(1-\lambda)^i\lambda \gamma\frac{1}{K}\sum_{k=1}^K (\bar P-\tilde P_{t-i}^k)V^*}
&\le  \frac{\gamma}{1-\gamma} \sqrt{\frac{1}{K}\lambda \log \frac{|\calS||\calA|TK}{\delta}}. 
\end{align}

\noindent {\bf Bounding (I.3).}
To bound the third term \textnormal{(I.3)} in Eq.\,\eqref{eq: error}, 
following the roadmap of \citet{woo_blessing_2023}, we divide the summation into two parts as follows.  
For any $\beta E \le t \le T$, 
we have  
\begin{align*}
&\textnormal{(I.3)} = \sum_{i=0}^t(1-\lambda)^i\lambda \gamma \linf{\frac{1}{K}\sum_{k=1}^K\tilde P_{t-i}^k(V^*-V_{t-i}^k)} \\
&=\sum_{i=0}^t(1-\lambda)^{t-i}\lambda \gamma \linf{\frac{1}{K}\sum_{k=1}^K\tilde P_{i}^k(V^*-V_{i}^k)}\\
& = \sum_{i=0}^{\chi(t)-\beta E} (1-\lambda)^{t-i}\lambda \gamma \linf{\frac{1}{K}\sum_{k=1}^K\tilde P_{i}^k(V^*-V_{i}^k)} 
+ \sum_{i= \chi(t)-\beta E+1}^{t}(1-\lambda)^{t-i}\lambda \gamma \linf{\frac{1}{K}\sum_{k=1}^K\tilde P_{i}^k(V^*-V_{i}^k)} \\
& \le \frac{\gamma}{1-\gamma} (1-\lambda)^{t-\chi(t)+\beta E}
+ \sum_{i= \chi(t)-\beta E+1}^{t}(1-\lambda)^{t-i}\lambda \gamma \linf{\frac{1}{K}\sum_{k=1}^K\tilde P_{i}^k(V^*-V_{i}^k)}. 
\end{align*}
By Lemma \ref{lm: sample complexity: stepsize push}, 
\begin{align*}
&\sum_{i= \chi(t)-\beta E+1}^{t}(1-\lambda)^{t-i}\lambda \gamma \linf{\frac{1}{K}\sum_{k=1}^K\tilde P_{i}^k(V^*-V_{i}^k)} \\ 
&\le \sum_{i= \chi(t)-\beta E+1}^{t}(1-\lambda)^{t-i}\lambda \gamma  
\left(\linf{\Delta_{\chi(i)}} + 2\lambda \frac{1}{K}\sum_{k=1}^K\sum_{j = \chi(i)}^{i-1} \linf{\Delta_{t^{\prime}}^k} \right. \\
& \quad \quad \quad \quad \quad \quad \left.+  \gamma \lambda \frac{1}{K}\sum_{k=1}^K \max_{s, a} \abth{\sum_{j = \chi(i)}^{i-1} \pth{\tilde{P}_{j}^k(s, a) - \bar{P}(s, a) }V^*}\right).  
\end{align*}
Since $\tilde{P}_j^k(s,a)$'s are independent across time $j$ and across state action pair $(s,a)$, and $|\tilde{P}_{j}^k(s, a) - \bar{P}(s, a) V^*| \le \frac{1}{1-\gamma}$ (from Lemma \ref{lm: coarse bound}), with Hoeffding's inequality and union bound, we get for any $\delta\in (0,1)$, with probability at least $1-\delta$, 
\begin{align}
\label{eq: perturbation between synchronization without decay}
\abth{\sum_{j = \chi(i)}^{i-1} \pth{\tilde{P}_{j}^k(s, a) - \bar{P}(s, a) }V^*} \le (E-1)\frac{1}{1-\gamma}\kappa +\frac{1}{1-\gamma} \sqrt{(E-1) \log \frac{|\calS|\calA|KT}{\delta}}   
\end{align}
for all $(s,a)\in \calS\times \calA$, $k\in {K}$, and $i$. 
By Lemma \ref{lm: sample complexity: stepsize push: alternative}, with probability at least $(1-\delta)$, we have   
\begin{align*}
&\sum_{i= \chi(t)-\beta E+1}^{t}(1-\lambda)^{t-i}\lambda \gamma  2\lambda \frac{1}{K}\sum_{k=1}^K\sum_{j = \chi(i)}^{i-1} \linf{\Delta_{j}^k} \\ 
& \le 2\lambda^2 \gamma \sum_{i= \chi(t)-\beta E+1}^{t}(1-\lambda)^{t-i} \frac{1}{K}\sum_{k=1}^K \sum_{j = \chi(i)}^{i-1}
\pth{\linf{\Delta_{\chi(i)}} + 3\frac{\gamma}{1-\gamma}\lambda (E-1) \kappa + 3\frac{\gamma}{1-\gamma}\sqrt{\lambda \log \frac{|\calS||\calA|KT}{\delta}}} \\
& \le 2\lambda \gamma (E-1) \max_{\chi(t)-\beta E\le i\le t} \linf{\Delta_{\chi(i)}} 
+ \frac{6\gamma^2\lambda^2}{1-\gamma}(E-1)^2\kappa 
+ \frac{6\gamma^2 \lambda}{1-\gamma}(E-1)\sqrt{\lambda \log \frac{|\calS||\calA|KT}{\delta}}. 
\end{align*}
Thus, {by applying the union bound}, we get with probability at least $(1-2\delta)$, 
\begin{align*}
&\sum_{i= \chi(t)-\beta E+1}^{t}(1-\lambda)^{t-i}\lambda \gamma \linf{\frac{1}{K}\sum_{k=1}^K\tilde P_{i}^k(V^*-V_{i}^k)} \\ 
& \le \gamma \max_{\chi(t)-\beta E\le i\le t} \linf{\Delta_{\chi(i)}}  +
2\lambda \gamma (E-1) \max_{\chi(t)-\beta E\le i\le t} \linf{\Delta_{\chi(i)}} 
+ \frac{6\gamma^2\lambda^2}{1-\gamma}(E-1)^2 \kappa \\
& \quad + \frac{6\gamma^2 \lambda}{1-\gamma}(E-1)\sqrt{\lambda \log \frac{|\calS||\calA|KT}{\delta}} \\
& \quad +  
\sum_{i= \chi(t)-\beta E+1}^{t}(1-\lambda)^{t-i}\lambda \gamma \pth{\frac{\gamma \lambda}{1-\gamma}(E-1) \kappa + \frac{\gamma \lambda}{1-\gamma }\sqrt{(E-1)\log \frac{|\calS||\calA|KT}{\delta}}} \\
& = \gamma(1+2\lambda(E-1)) \max_{\chi(t)-\beta E\le i\le t} \linf{\Delta_{\chi(i)}} + \frac{\gamma^2}{1-\gamma} (6\lambda^2(E-1)^2+\lambda(E-1)) \kappa\\
& \quad +\frac{\gamma^2 \lambda}{1-\gamma}\sqrt{(E-1)\log \frac{|\calS||\calA|KT}{\delta}} 
+ \frac{6\gamma^2 \lambda}{1-\gamma}(E-1)\sqrt{\lambda \log \frac{|\calS||\calA|KT}{\delta}}. 
\end{align*}

Hence, the third term \textnormal{(I.3)} in Eq.\,\eqref{eq: error} can be bounded as  
\begin{align}
\label{eq: bound: term 3}
\nonumber 
&\textnormal{(I.3)}  = \sum_{i=0}^t(1-\lambda)^i\lambda\gamma\linf{ \frac{1}{K}\sum_{k=1}^K\tilde P_{i}^k(V^*-V_{i}^k)} \\
\nonumber 
& \le \frac{\gamma}{1-\gamma} (1-\lambda)^{t-\chi(t)+\beta E} 
+ \gamma(1+2\lambda(E-1)) \max_{\chi(t)-\beta E\le i\le t} \linf{\Delta_{\chi(i)}} + \frac{\gamma^2}{1-\gamma} (6\lambda^2(E-1)^2+\lambda(E-1)) \kappa\\
& +\frac{\gamma^2 \lambda}{1-\gamma}\sqrt{(E-1)\log \frac{|\calS||\calA|KT}{\delta}} 
+ \frac{6\gamma^2 \lambda}{1-\gamma}(E-1)\sqrt{\lambda \log \frac{|\calS||\calA|KT}{\delta}}. 
\end{align}
%

\noindent {\bf Combing the bounds of (I.1), (I.2), and (I.3) in Eq.\,\eqref{eq: error}.}
Combing the bounds for terms (\ref{eq: bound: term 1}), (\ref{eq: bound: sample: time-invariant: term 2}), and (\ref{eq: bound: term 3}), we get the following recursion holds for all rounds $T$ with probability at least $(1-3\delta)$:  
\begin{align*}
\linf{\Delta_{t+1}} 
& \le 
 (1-\lambda)^{t+1}\frac{1}{1-\gamma}  
+ \frac{\gamma}{1-\gamma}\sqrt{\frac{1}{K}\lambda \log\frac{|\calS||\calA|TK}{\delta}} +\frac{\gamma}{1-\gamma} (1-\lambda)^{t-\chi(t)+\beta E}\\
&\quad + \gamma(1+2\lambda(E-1)) \max_{\chi(t)-\beta E\le i\le t} \linf{\Delta_{\chi(i)}} + \frac{\gamma^2}{1-\gamma} (6\lambda^2(E-1)^2+\lambda(E-1)) \kappa\\
& \quad + \frac{\gamma^2 \lambda}{1-\gamma}\sqrt{(E-1)\log \frac{|\calS||\calA|KT}{\delta}} 
+ \frac{6\gamma^2 \lambda}{1-\gamma}(E-1)\sqrt{\lambda \log \frac{|\calS||\calA|KT}{\delta}}\\
& \le \gamma(1+2\lambda(E-1)) \max_{\chi(t)-\beta E\le i\le t} \linf{\Delta_{\chi(i)}} + \frac{2}{1-\gamma} (1-\lambda)^{\beta E} + \frac{\gamma^2}{1-\gamma} (6\lambda^2(E-1)^2+\lambda(E-1))  \kappa \\
&\quad + \frac{\gamma^2 \lambda}{1-\gamma}\sqrt{(E-1)\log \frac{|\calS||\calA|KT}{\delta}} 
+ \frac{6\gamma^2 \lambda}{1-\gamma}(E-1)\sqrt{\lambda \log \frac{|\calS||\calA|KT}{\delta}} \\
& \quad + \frac{\gamma}{1-\gamma}\sqrt{\frac{1}{K}\lambda \log\frac{|\calS||\calA|TK}{\delta}}. 
\end{align*}
%
%
%
Let 
\begin{align}
\label{eq: rho}
\nonumber 
\rho := &\frac{2}{1-\gamma} (1-\lambda)^{\beta E} + \frac{\gamma^2}{1-\gamma} (6\lambda^2(E-1)^2+\lambda(E-1))  \kappa  \\
& \nonumber + \frac{\gamma^2 \lambda}{1-\gamma}\sqrt{(E-1)\log \frac{|\calS||\calA|KT}{\delta}} 
+ \frac{6\gamma^2 \lambda}{1-\gamma}(E-1)\sqrt{\lambda \log \frac{|\calS||\calA|KT}{\delta}} \\
& + \frac{\gamma}{1-\gamma}\sqrt{\frac{1}{K}\lambda \log\frac{|\calS||\calA|TK}{\delta}}.
\end{align}
With the assumption that \(\lambda \leq \frac{1-\gamma}{4\gamma(E-1)}\), the above recursion can be written as \[\linf{\Delta_{t+1}} \le \frac{1+\gamma}{2}\max_{\chi(t)-\beta E\le i\le t} \linf{\Delta_{\chi(i)}}+ \rho. \]
Unrolling the above recursion $L$ times where $L\beta E \le t \le T$, we obtain that 
\begin{align*}
\linf{\Delta_{t+1}} &\le (\frac{1+\gamma}{2})^L\max_{\chi(t)-L\beta E\le i\le t} \linf{\Delta_{\chi(i)}}+ \sum_{i=0}^{L-1}(\frac{1+\gamma}{2})^i \rho\\
&\leq (\frac{1+\gamma}{2})^L\frac{1}{1-\gamma} + \frac{2}{1-\gamma}\rho. 
\end{align*}
{Choosing $ \beta =\lf{\frac{1}{E}\sqrt{\frac{(1-\gamma)T}{2\lambda}}}\rf$, $L=\lc{\sqrt{\frac{\lambda T}{1-\gamma}}}\rc$}, $t+1 = T$, we get 
\begin{align*}
\linf{\Delta_{T}} 
& \le \frac{1}{1-\gamma} (\frac{1+\gamma}{2})^{\sqrt{\frac{\lambda T}{1-\gamma}}}
+ \frac{2}{1-\gamma}\left(\frac{2}{1-\gamma} (1-\lambda)^{\beta E} + \frac{\gamma^2 }{1-\gamma} (6\lambda^2(E-1)^2+\lambda(E-1))  \kappa \right. \\
& \nonumber \quad + \left.\pth{\frac{6\gamma^2 \lambda}{1-\gamma}\sqrt{E-1}+\frac{\gamma^2 \sqrt{\lambda}}{1-\gamma}}\sqrt{\lambda (E-1) \log \frac{|\calS||\calA|KT}{\delta}} 
+\frac{\gamma}{1-\gamma}\sqrt{\frac{1}{K}\lambda \log\frac{|\calS||\calA|TK}{\delta}} \right)\\
& \leq \frac{1}{1-\gamma} \exp\sth{-\frac{1}{2}\sqrt{(1-\gamma)\lambda T}}
+ \frac{4}{(1-\gamma)^2}\exp\sth{-\frac{1}{2}\sqrt{(1-\gamma)\lambda T}}\\
&\quad+ \frac{2\gamma^2}{(1-\gamma)^2} (6\lambda^2(E-1)^2+\lambda(E-1)) \kappa \\
& \nonumber \quad +\pth{\frac{12\gamma^2 \lambda}{(1-\gamma)^2}\sqrt{E-1}+\frac{2\gamma^2 \sqrt{\lambda}}{(1-\gamma)^2}}\sqrt{\lambda (E-1) \log \frac{|\calS||\calA|KT}{\delta}}  
+\frac{2\gamma}{(1-\gamma)^2}\sqrt{\frac{1}{K}\lambda \log\frac{|\calS||\calA|TK}{\delta}}\\
&\leq \frac{4}{(1-\gamma)^2} \exp\sth{-\frac{1}{2}\sqrt{(1-\gamma)\lambda T}}
+ \frac{2\gamma^2}{(1-\gamma)^2} (6\lambda^2(E-1)^2+\lambda(E-1)) \kappa \\
\nonumber 
&\quad {+\pth{\frac{14\gamma^2 \lambda}{(1-\gamma)^2}\sqrt{E-1}}\sqrt{\log \frac{|\calS||\calA|KT}{\delta}}} 
+\frac{2\gamma}{(1-\gamma)^2}\sqrt{\frac{1}{K}\lambda \log\frac{|\calS||\calA|TK}{\delta}}, 
\end{align*}  
where the second inequality follows from 
\begin{align*}
    &(\frac{1+\gamma}{2})^{\sqrt{\frac{\lambda T}{1-\gamma}}}=(1-\frac{1-\gamma}{2})^{\sqrt{\frac{\lambda T}{1-\gamma}}}  \leq \exp\sth{-\frac{1}{2}\sqrt{(1-\gamma)\lambda T}},\\
    &(1-\lambda)^{\beta E} \leq \exp\sth{-\lambda \sqrt{\frac{(1-\gamma)T}{2\lambda}}} \leq \exp\sth{-\frac{1}{2}\sqrt{(1-\gamma)\lambda T}}.
\end{align*}
{By the assumption that $(E-1)\leq \frac{1}{K\lambda}$, the above can be further simplified as 
\begin{align*}
\linf{\Delta_{T}} 
&\leq \frac{4}{(1-\gamma)^2} \exp\sth{-\frac{1}{2}\sqrt{(1-\gamma)\lambda T}} + \frac{14\gamma^2}{(1-\gamma)^2} \lambda(E-1) \kappa +\frac{16}{(1-\gamma)^2}\sqrt{\frac{\lambda}{K} \log \frac{|\calS||\calA|KT}{\delta}}. 
\end{align*}}

\newpage
\section{Proof of \prettyref{thm:impossibility}}
\label{app: lower bound}


Let $|\calA|=1$, in which case $Q$-function coincides with the $V$-function. 
According to Algorithm~\ref{algorithm:pfedda}, when $(t+1)\text{ mod } E \ne 0$, we have 
\[
Q_{t+1}^k = \pth{(1-\lambda)I + \lambda \gamma P^k} Q_t^k + \lambda R. 
\]
Define $A^k \triangleq (1-\lambda)I + \lambda \gamma P^k$. 
We obtain the following recursion between two synchronization rounds:
\[
Q_{(z+1)E}^k = (A^k)^E Q_{z E}^k + \pth{(A^k)^0+\dots (A^k)^{E-1}}\lambda R. 
\]
Define 
\begin{equation}
    \label{eq:bar-A}
    \bar{A}^{(\ell)} \triangleq \frac{1}{K}\sum_{k=1}^K (A^k)^\ell.
\end{equation}
Note that $Q^*$ is the fixed point under the transition kernel $\bar{P}$, we have $\lambda R = \lambda (I-\gamma \bar{P})Q^* = (I-\bar{A}^{(1)})Q^*$ since $\bar A^{(1)} = I-\lambda(I-\gamma\bar P)$.
Furthermore, since $Q_{t E}^1,\dots,Q_{t E}^K$ are identical due to synchronization, we get
\[
\bar{Q}_{(z+1)E}
=\bar{A}^{(E)} \bar{Q}_{zE} + \pth{I+\bar{A}^{(1)}+\dots \bar{A}^{(E-1)}} \pth{I-\bar{A}^{(1)}}Q^*. 
\]
Consequently,
\begin{align}
\Delta_{(z+1)E} 
&= Q^*-\bar Q_{(z+1)E}\nonumber\\
&= \bar{A}^{(E)} \Delta_{zE} + \pth{ \pth{I- \bar{A}^{(E)} } -   \pth{I+\bar{A}^{(1)}+\dots \bar{A}^{(E-1)}} \pth{I-\bar{A}^{(1)}} } Q^*. \label{eq:Delta_t-recursion}
\end{align}


Next, consider $|\calS|=2$ and even $K$ with 
\[
P^{2k-1}=\begin{bmatrix}
    1& 0\\
    0& 1
\end{bmatrix},\quad 
P^{2k}=\begin{bmatrix}
    0& 1\\
    1& 0
\end{bmatrix},\quad \text{for}~k\in\naturals. 
\]
Then $\bar P = \frac{1}{2}\mathbf{11}^\top$, where $\mathbf{1}$ denotes the all ones vector.
For the above transition kernels, we have 
\[
\frac{1}{K}\sum_{k=1}^K (P^{k})^\ell =\begin{cases}
    I,& \ell~\text{even},\\
    \bar P,& \ell~\text{odd}.\\
\end{cases}
\]
Applying the definition of $\bar{A}^{(\ell)}$ in~\eqref{eq:bar-A} yields that 
\begin{align*}
    \bar{A}^{(\ell)}&= \frac{1}{K}\sum_{k=1}^K (A^k)^\ell\\
    &=\frac{1}{K}\sum_{k=1}^K((1-\lambda)I + \lambda \gamma P^k)^\ell\\
    &=\frac{1}{K}\sum_{k=1}^K \sum_{j=0}^\ell\binom{\ell}{j} (\lambda \gamma P^k)^{j} ((1-\lambda)I)^{\ell-j}\\
    &= \sum_{j~\text{even}}\binom{\ell}{j}(1-\lambda)^{\ell-j} (\lambda \gamma)^j (I-\bar P + \bar P)
    + \sum_{j~\text{odd}}\binom{\ell}{j}(1-\lambda)^{\ell-j} (\lambda \gamma)^j \bar P\\
    &= \underbrace{
    \frac{1}{2}((1-\lambda - \lambda \gamma)^\ell + (1-\lambda + \lambda \gamma)^\ell)
    }_{\triangleq \alpha_\ell} (I-\bar P)
    + \underbrace{(1-\lambda + \lambda \gamma)^\ell}_{\triangleq \beta_\ell} \bar P\\
    & = \alpha_\ell (I-\bar P) + \beta_\ell \bar P,
\end{align*}
which is the eigen-decomposition of $\bar{A}^{(\ell)}$.
Let 
\[
\lambda_1\triangleq(1+\gamma)\lambda, \lambda_2\triangleq(1-\gamma)\lambda,
\quad \nu_1=1-\lambda_1, \nu_2=1-\lambda_2.
\]
Then 
\begin{equation}
    \label{eq:alpha-beta}
    \alpha_\ell = \frac{1}{2}(\nu_1^\ell + \nu_2^\ell),
    \quad 
    \beta_\ell=\nu_2^\ell.
\end{equation}

Note that $0\le \alpha\le \beta\le 1$ and $I-\bar P$ and $\bar P$ are orthogonal projection matrices satisfying $(I-\bar P)\bar P = 0$. 
The matrices for the second term of the error on the right-hand side of~\ref{eq:Delta_t-recursion} reduce to
\begin{align*}
&\phantom{{}={}} \pth{I+\bar{A}^{(1)}+\dots \bar{A}^{(E-1)}} \pth{I-\bar{A}^{(1)}}\\
& = \pth{\sum_{\ell=0}^{E-1}\alpha_\ell(I-\bar P) +\sum_{\ell=0}^{E-1}\beta_\ell \bar P} \pth{(\alpha_0-\alpha_1)(I-\bar P)+(\beta_0-\beta_1)\bar P}\\
& = \pth{(1-\alpha_1)\sum_{\ell=0}^{E-1}\alpha_\ell(I-\bar P)^2 +(1-\beta_1)\sum_{\ell=0}^{E-1}\beta_\ell \bar P^2} \text{ since } \alpha_0=\beta_0=1 \\
& = \pth{(1-\alpha_1)\sum_{\ell=0}^{E-1}\alpha_\ell(I-\bar P) +(1-\beta_1)\sum_{\ell=0}^{E-1}\beta_\ell \bar P} \text{ since } (I-\bar P) \text{ and } \bar P \text{ are idempotent}.
\end{align*}
It follow that
\begin{align*}
&\phantom{{}={}}\pth{I- \bar{A}^{(E)} } -   \pth{I+\bar{A}^{(1)}+\dots \bar{A}^{(E-1)}} \pth{I-\bar{A}^{(1)}} \\
& = \underbrace{\pth{ (1-\alpha_E)-(1-\alpha_1)\pth{\sum_{i=0}^{E-1}\alpha_i} }}_{\triangleq \kappa_E} (I-\bar P) 
+ \underbrace{\pth{ (1-\beta_E)-(1-\beta_1)\pth{\sum_{i=0}^{E-1}\beta_i}  }}_{=0} \bar P.
\end{align*}
Applying \eqref{eq:alpha-beta} yields that
\begin{equation}
\label{eq:kappa}
\kappa_E= - \frac{\gamma}{2}\pth{\frac{1-\nu_2^E}{1-\gamma}- \frac{1-\nu_1^E}{1+\gamma}}.
\end{equation}

It follows from~\eqref{eq:Delta_t-recursion} that the error evolves as
\[
\Delta_{(z+1)E}
= \pth{\alpha_E (I-\bar P) + \beta_E \bar P} \Delta_{zE}
+ \kappa_E (I-\bar P) Q^*,
\]
which further yields the following full recursion of the error:
\begin{align*}
\Delta_{zE}
& = \pth{\alpha_E (I-\bar P) + \beta_E \bar P}^z \Delta_{0} + \sum_{\ell = 0}^{z-1}\pth{\alpha_E (I-\bar P) + \beta_E \bar P}^\ell \kappa_E (I-\bar P) Q^* \\
& = \pth{\alpha_E^z (I-\bar P) + \beta_E^z \bar P} \Delta_{0} + \sum_{\ell = 0}^{z-1}\pth{\alpha_E^\ell (I-\bar P) + \beta_E^\ell \bar P}  \kappa_E (I-\bar P) Q^* \\ &\phantom{{}={}}\text{ since } \pth{\alpha_E (I-\bar P) + \beta_E \bar P}^\ell = \alpha_E^\ell (I-\bar P) + \beta_E^\ell \bar P, \forall \ell \in \mathbb{N}\\
& = \pth{\alpha_E^z (I-\bar P) + \beta_E^z \bar P} \Delta_{0} + \frac{1-\alpha_E^z}{1-\alpha_E}\kappa_E (I-\bar P) Q^* \\
& = \pth{\alpha_E^z + \frac{1-\alpha_E^z}{1-\alpha_E}\kappa_E  } (I-\bar P)Q^* + \beta_E^z \bar P Q^*,
\end{align*}
where the last equality applied the zero initialization condition. 


Note that $(I-\bar P)Q^*$ and $\bar P Q^*$ are orthogonal vectors. 
Since $|\calS|=2$, we have 
\begin{align*}
    &\linf{\Delta_{zE}}
\ge \frac{1}{\sqrt{2}} \norm{\Delta_{zE}}
\ge \frac{\min\{\|(I-\bar P)Q^*\|_2, \|\bar P Q^*\|_2 \}}{\sqrt{2}} \cdot \max\sth{|\alpha_E^z + \frac{1-\alpha_E^z}{1-\alpha_E}\kappa_E|, \beta_E^z}. 
\end{align*}
{Let $R = \begin{bmatrix}
    r_1\\r_2
\end{bmatrix}$, since $Q^*=(I-\gamma \bar P)^{-1}R = (I-\bar P)R + \frac{1}{1-\gamma}\bar P R$, we obtain that
\[
(I-\bar P)Q^*
= (I-\bar P) R = \frac{1}{2}\begin{bmatrix}
    r_1-r_2 \\ r_2-r_1
\end{bmatrix},
\qquad
\bar P Q^*
= \frac{1}{1-\gamma}\bar P R = \frac{1}{2(1-\gamma)}\begin{bmatrix}
    r_1+r_2 \\ r_1+r_2
\end{bmatrix}.
\] 
\[
\|(I-\bar P)Q^*\|_2
= \frac{\sqrt{2}}{2}\abth{r_1-r_2},
\qquad
\|\bar P Q^*\|_2
= \frac{\sqrt{2}}{2(1-\gamma)}\abth{r_1+r_2}.
\] 

When $r_1=r_2,$ the error $\Delta_{zE}$ reduces to $\beta_E^z \bar P Q^*$, and $\linf{\Delta_{zE}} = \frac{1}{2(1-\gamma)}\abth{r_1+r_2}\abth{\beta_E^z}  $;
otherwise, $\min\{\|(I-\bar P)Q^*\|_2, \|\bar P Q^*\|_2 \}=\frac{\sqrt{2}}{2}\min\{\abth{r_1-r_2}, \frac{1}{1-\gamma}\abth{r_1+r_2}\}.$ }
It remains to analyze the coefficients as functions of $\lambda$.
To this end, we introduce the following lemma:

\begin{lemma}
\label{lmm:kappa_E}
The following properties hold:
\begin{enumerate}
    \item Negativity: $\kappa_E<0$;
    \item Monotonicity: $\frac{\kappa_E}{1-\alpha_E}$ is monotonically decreasing for $\lambda \in (0,\frac{1}{1+\gamma})$;
    \item Upper bound: $|\frac{\kappa_E}{1-\alpha_E}|\le \frac{\gamma^2}{1-\gamma^2}$ for $\lambda\in (0,\frac{1}{1+\gamma})$;
    \item Lower bound: if $(1+\gamma)\lambda \le \frac{1}{2E}$, then $|\frac{\kappa_E}{1-\alpha_E}|\ge \frac{\lambda \gamma^2(E-1)}{4}$.
\end{enumerate}
\end{lemma}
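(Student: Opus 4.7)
I will work with the identity $1-(1-x)^E = x \sum_{j=0}^{E-1}(1-x)^j$ and set $f(x) \triangleq \sum_{j=0}^{E-1}(1-x)^j$, $u \triangleq f((1-\gamma)\lambda)$, $v \triangleq f((1+\gamma)\lambda)$. Direct substitution into the definitions of $\kappa_E$ and $\alpha_E$ from \eqref{eq:kappa} and \eqref{eq:alpha-beta} yields the compact forms $\kappa_E = -\tfrac{\gamma \lambda}{2}(u - v)$ and $1 - \alpha_E = \tfrac{\lambda}{2}\bigl((1-\gamma) u + (1+\gamma) v\bigr)$, so that
\[
\frac{\kappa_E}{1 - \alpha_E} \;=\; -\gamma \cdot \frac{u - v}{(1-\gamma)\, u + (1+\gamma)\, v}.
\]
Property~1 is then immediate: for $\lambda \in (0, 1/(1+\gamma))$ and $E \ge 2$, every $(1-x)^j$ with $j \ge 1$ is strictly decreasing on $(0,1)$, hence $u > v > 0$ and thus $\kappa_E < 0$.

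For the upper bound (Property~3), I will cross-multiply $|\kappa_E/(1-\alpha_E)| \le \gamma^2/(1-\gamma^2)$; after cancellation of common positive factors this reduces to $(1-\gamma) u \le (1+\gamma) v$. Multiplying by $\lambda$ and using $(1-\gamma)\lambda\, u = 1 - \nu_2^E$ and $(1+\gamma)\lambda\, v = 1 - \nu_1^E$, this becomes $\nu_1^E \le \nu_2^E$, which holds since $0 < \nu_1 < \nu_2 < 1$. For the lower bound (Property~4), I will expand
\[
u - v \;=\; \sum_{j=1}^{E-1}(\nu_2^j - \nu_1^j) \;\ge\; (\nu_2 - \nu_1) \sum_{j=1}^{E-1} j\, \nu_1^{j-1},
\]
then invoke Bernoulli's inequality under the hypothesis $(1+\gamma)\lambda \le 1/(2E)$ to get $\nu_1^{j-1} \ge \nu_1^{E-1} \ge 1 - (E-1)(1+\gamma)\lambda \ge 1/2$. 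Combining with $\nu_2 - \nu_1 = 2\gamma\lambda$ and $\sum_{j=1}^{E-1} j = E(E-1)/2$ yields $u - v \ge \gamma \lambda E(E-1)/2$. The denominator admits the trivial bound $(1-\gamma) u + (1+\gamma) v \le 2E$ since $0 \le f \le E$ on $[0,1]$, producing the claimed $|\kappa_E/(1-\alpha_E)| \ge \gamma^2 \lambda (E-1)/4$.

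The genuine work lies in the monotonicity claim (Property~2). Setting $h(\lambda) \triangleq u/v$ and $\psi(h) \triangleq (h-1)/((1-\gamma) h + (1+\gamma))$, a one-line computation gives $\psi'(h) = 2/((1-\gamma) h + (1+\gamma))^2 > 0$, so $\kappa_E/(1-\alpha_E) = -\gamma\, \psi(h(\lambda))$ is decreasing in $\lambda$ iff $h$ is increasing in $\lambda$. Using $f(x) = (1-(1-x)^E)/x$, one rewrites $h(\lambda) = \frac{1+\gamma}{1-\gamma}\cdot \frac{1-\nu_2^E}{1-\nu_1^E}$, and computing $\tfrac{d}{d\lambda}\log h$ reduces the claim to showing that
\[
p(t) \;\triangleq\; \frac{(1-t)\, t^{E-1}}{1-t^E}
\]
is non-decreasing on $(0,1)$, evaluated at $t=\nu_1$ and $t=\nu_2$. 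The key algebraic simplification is that cancelling $1-t$ in $1-t^E = (1-t)(1+t+\cdots+t^{E-1})$ and then dividing numerator and denominator by $t^{E-1}$ gives
\[
p(t) \;=\; \frac{1}{\sum_{j=0}^{E-1} t^{-j}},
\]
which is visibly increasing on $(0,1)$ because each $t^{-j}$ with $j \ge 1$ is strictly decreasing. I expect this algebraic collapse of $p$ to be the main obstacle of the proof; without it, the required inequality looks like it demands a delicate log-concavity or rearrangement argument in $E$.
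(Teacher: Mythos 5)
Your proof is correct; I checked each of the four parts against the definitions $\nu_1=1-(1+\gamma)\lambda$, $\nu_2=1-(1-\gamma)\lambda$, and the identities $\kappa_E=-\tfrac{\gamma\lambda}{2}(u-v)$, $1-\alpha_E=\tfrac{\lambda}{2}\bigl((1-\gamma)u+(1+\gamma)v\bigr)$ all hold, so the compact form of $\kappa_E/(1-\alpha_E)$ is right and everything downstream goes through. In substance you follow the same route as the paper: the lower bound is the same computation (your factorization bound $\nu_2^j-\nu_1^j\ge j(\nu_2-\nu_1)\nu_1^{j-1}$ plays the role of the paper's integral $\int_{\nu_1}^{\nu_2}\ell x^{\ell-1}dx$, and your denominator bound $2E$ is the paper's $2\lambda E$ after the $\lambda$ is factored out), and the monotonicity hinges on exactly the same key fact, namely that $t\mapsto t^{E-1}/\sum_{j=0}^{E-1}t^j$ is increasing on $(0,1)$ — the paper reaches this by asserting a derivative formula for $\kappa_E/(1-\alpha_E)$ and cancelling, while your factorization through $\psi\circ h$ makes each step a one-line verifiable computation, which is a genuine improvement in checkability. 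The one place you truly diverge is Property 3: the paper derives the upper bound from monotonicity plus an evaluation at the endpoint $\lambda=\tfrac{1}{1+\gamma}$, whereas you prove it directly by cross-multiplication, reducing it to $\nu_1^E\le\nu_2^E$; this makes Property 3 independent of Property 2 and is both shorter and more robust. A cosmetic caveat shared with the paper: strict negativity in Property 1 and strict monotonicity in Property 2 only hold for $E\ge 2$ (for $E=1$ one has $\kappa_1=0$ and $h\equiv 1$), which you flag and which is harmless for the theorem's use of the lemma.
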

\begin{proof}
We prove the properties separately. 
\begin{enumerate}
\item Note that $\nu_1< \nu_2$, $1-\nu_1=(1+\gamma)\lambda$, and $1-\nu_2=(1-\gamma)\lambda$.
    Then it follows from \eqref{eq:kappa} that
    \[
    \kappa_E = -\frac{\lambda \gamma}{2}\sum_{i=1}^{E-1} (\nu_2^i - \nu_1^i) < 0.
    \]
\item For the monotonicity, it suffices to show that $\frac{d}{d\lambda} \frac{\kappa_E}{1-\alpha_E}\le 0$.
    We calculate the derivative as
    \[
    \frac{d}{d\lambda} \frac{\kappa_E}{1-\alpha_E}
    = \frac{\gamma E (1-\nu_1^E)(1-\nu_2^E)}{2(1-\gamma^2)(1-\alpha_E)^2}
    \pth{ \frac{(1+\gamma)\nu_1^{E-1}}{1- \nu_1^E} -  \frac{(1-\gamma)\nu_2^{E-1}}{1- \nu_2^E} }.
    \]
    Note that
    \[
    \frac{(1+\gamma)\nu_1^{E-1}}{1- \nu_1^E} -  \frac{(1-\gamma)\nu_2^{E-1}}{1- \nu_2^E}
    = \frac{1}{\lambda}\pth{\frac{\nu_1^{E-1}}{1+\nu_1+\dots+\nu_1^{E-1}} - \frac{\nu_2^{E-1}}{1+\nu_2+\dots+\nu_2^{E-1}}}\le 0.
    \]
\item For the upper bound, it suffices to show the result at $\lambda=\frac{1}{1+\gamma}$ due to the negativity and monotonicity. 
    At $\lambda=\frac{1}{1+\gamma}$, we have
    \[
    \abth{\frac{\kappa_E}{1-\alpha_E}}
    =\frac{\gamma}{1-\gamma^2}\pth{\gamma - \frac{(\frac{2\gamma}{1+\gamma})^E}{2-(\frac{2\gamma}{1+\gamma})^E}}
    \le \frac{\gamma^2}{1-\gamma^2}.
    \]
\item     For the lower bound, the case $E=1$ trivially holds.
    Next, consider $E\ge 2$. We have
    \begin{align*}
    \frac{\kappa_E}{1-\alpha_E}
    & = -\frac{\gamma}{1-\gamma^2} \frac{(1+\gamma)(1-\nu_2^E) - (1-\gamma)(1-\nu_1^E) }{(1-\nu_1^E)+(1-\nu_2^E)} \\
    & = -\lambda \gamma \frac{\sum_{\ell=1}^{E-1}(\nu_2^{\ell}- \nu_1^{\ell})}{(1-\nu_1^E)+(1-\nu_2^E)}.
    \end{align*}
    Note that $1-nx \le (1-x)^n\le 1-\frac{1}{2}nx$ for $n\ge 1$ and $0\le x\le \frac{1}{n}$.
    Then, for $(1+\gamma)\lambda \le \frac{1}{2E}$, we have
    \begin{align*}
        \nu_1^E & = (1-(1+\gamma)\lambda)^E \ge 1-(1+\gamma)\lambda E \ge \frac{1}{2},\\
        \nu_2^E & = (1-(1-\gamma)\lambda)^E \ge 1-(1-\gamma)\lambda E.
    \end{align*}
    Moreover, for all $x\in [\nu_1,\nu_2]\subseteq[0,1]$ and $\ell-1\le E$, we have
    \[
    x^{\ell-1}\ge x^E\ge \nu_1^E\ge \frac{1}{2}.
    \]
    We obtain that
    \[
    \frac{\sum_{\ell=1}^{E-1}(\nu_2^{\ell}- \nu_1^{\ell})}{(1-\nu_1^E)+(1-\nu_2^E)}
    \ge \frac{ \sum_{\ell=1}^{E-1} \int_{\nu_1}^{\nu_2} \ell \cdot x^{\ell-1}dx }{2\lambda E}
    \ge \frac{ \sum_{\ell=1}^{E-1}  \ell \frac{1}{2} (\nu_2-\nu_1) }{2\lambda E}
    = \frac{1}{4}\gamma (E-1).
    \]    
\end{enumerate} 
    The proof is completed.
\end{proof}

We consider two regimes of the stepsize separated by $\lambda_0 \triangleq \frac{\log z}{(1-\gamma) z E}< \frac{1}{1+\gamma}$, where the dominating error is due to the small stepsize and the environment heterogeneity, respectively:


\paragraph{Slow rate due to small stepsize when $\lambda \le \lambda_0 $.} 
Since $\beta_E^z$ monotonically decreases as $\lambda$ increases,
\[
\beta_E^z = (1-(1-\gamma)\lambda)^{zE}\geq  (1-(1-\gamma)\lambda_0)^{zE} = \pth{1-\frac{\log z}{zE} }^{zE}.
\]
Note that $\frac{\log z }{z E} \in (0,\frac{1}{2})$, applying the fact $\log(1-x)+x \ge -x^2$ for $x\in[0,\frac{1}{2}]$ yields that
\[
\log\pth{1-\frac{\log z}{zE} } + \frac{\log r}{zE} 
\ge -\pth{\frac{\log z}{zE}}^2 \ge -\frac{1}{z E}.
\]
Then we get
\[
\beta_E^z
\ge \pth{1-\frac{\log z}{zE} }^{zE} 
\ge \frac{1}{e z}.
\]

\paragraph{Slow rate due to environment heterogeneity when $\lambda \ge \lambda_0 $.} 
Recall that $\lambda<\frac{1}{1+\gamma}$.
  Applying the triangle inequality yields that
  \begin{align*}
\abth{\alpha_E^z + \frac{1-\alpha_E^z}{1-\alpha_E}\kappa_E }
\ge \abth{\frac{\kappa_E}{1-\alpha_E}} - \pth{1+ \abth{\frac{\kappa_E}{1-\alpha_E}} }\alpha_E^z.
\end{align*}
For the first term, by the negativity and monotonicity in~\prettyref{lmm:kappa_E}, it suffices to show the lower bound at $\lambda = \lambda_0$.
Since $\lambda<\frac{1}{1+\gamma}$, then $\alpha_E = \frac{1}{2}\pth{(1-(1-\gamma)\lambda)^E + (1-(1+\gamma)\lambda)^E} $ decreases as $\lambda$ increases. 
{For $z\geq \exp\left\{-W_{-1}\left(-\frac{1-\gamma}{2(1+\gamma)}\right)\right\}, \text{ where } W_{-1} \text{ is the Lambert $W$ function} $, such that $(1+\gamma)\lambda_0\le \frac{1}{2E}$, we apply the lower bound in~\prettyref{lmm:kappa_E} and obtain that

\[\left|\frac{\kappa_E}{1-\alpha_E}\right| \geq \frac{\lambda_0 \gamma^2(E-1)}{4}\geq \frac{\frac{\log z}{(1-\gamma) z E} \gamma^2(E-1)}{4}\geq \frac{(E-1)}{4E} \gamma^2 \frac{\log z}{(1-\gamma)z}.\]}
Additionally, applying the upper bound in~\prettyref{lmm:kappa_E} yields 
\[
\pth{1+ \abth{\frac{\kappa_E}{1-\alpha_E}} }\alpha_E^z 
\le \frac{\nu_2^{zE}}{1-\gamma^2}=\frac{(1-(1-\gamma)\lambda)^{zE}}{1-\gamma^2}\le \frac{1}{(1-\gamma^2)z}.
\]
{Therefore,
\begin{align*}
    \left|{\alpha_E^z + \frac{1-\alpha_E^z}{1-\alpha_E}\kappa_E }\right|
&\ge \left|{\frac{\kappa_E}{1-\alpha_E}}\right| - \left({1+ \left|{\frac{\kappa_E}{1-\alpha_E}}\right| }\right)\alpha_E^z\\
&\geq \frac{(E-1)}{4E} \gamma^2 \frac{\log z}{(1-\gamma)z} -  \frac{1}{(1-\gamma^2)z} \\
&= \frac{1}{(1-\gamma^2)z} \left({(1+\gamma)\gamma^2\log(z)(E-1)/(4E) -1}\right)\\
& = \frac{1}{(1-\gamma)z} \left({\frac{\gamma^2\log(z)(E-1)-4E/(1+\gamma)}{4E}}\right). 
\end{align*}
When $r_1=r_2$, 
\begin{align*}
    \linf{\Delta_{zE}} = \frac{\abth{r_1+r_2}}{2(1-\gamma)} \abth{\beta_E^z} \\
    \geq \frac{\abth{r_1+r_2}}{2(1-\gamma)}\frac{E}{eT};
\end{align*}
    
otherwise,
\begin{align*}
\|{\Delta_{zE}}\|_{\infty}
&\ge \frac{\min\{\|(I-\bar P)Q^*\|_2, \|\bar P Q^*\|_2 \}}{\sqrt{2}} \cdot \max\left\{|\alpha_E^z + \frac{1-\alpha_E^z}{1-\alpha_E}\kappa_E|, \beta_E^z\right\}\\
&\ge \frac{1}{2} \min\left\{\abth{r_1-r_2}, \frac{1}{1-\gamma}\abth{r_1+r_2}\right\}\max\left\{|\alpha_E^z + \frac{1-\alpha_E^z}{1-\alpha_E}\kappa_E|, \beta_E^z\right\}\\
& \geq \frac{1}{2}\min\left\{\abth{r_1-r_2}, \frac{1}{1-\gamma}\abth{r_1+r_2}\right\}\max\left\{\frac{1}{(1-\gamma)z} \pth{\frac{\gamma^2\log(z)(E-1)-4E/(1+\gamma)}{4E}}, \frac{1}{e z}\right\}\\
& = \frac{1}{2}\min\left\{\abth{r_1-r_2}, \frac{1}{1-\gamma}\abth{r_1+r_2}\right\}\max\left\{\frac{E}{(1-\gamma)T} \pth{\frac{\gamma^2\log(z)(E-1)-4E/(1+\gamma)}{4E}}, \frac{E}{eT}\right\}. 
\end{align*}

We can choose $\log(z)\geq \frac{4E(\gamma+2)}{(1+\gamma)\gamma^2(E-1)}, E\geq 2$ so that $\left(\frac{\gamma^2\log(z)(E-1)-4E/(1+\gamma)}{4E}\right)\geq 1$. Then the first term inside the max operator is bigger. Then,
\[\|\Delta_{zE}\|_{\infty}  \geq \frac{1}{2}\min\left\{\abth{r_1-r_2}, \frac{1}{1-\gamma}\abth{r_1+r_2}\right\}\frac{E}{(1-\gamma)T}. \]



}

\newpage 

\section{Additional experiments}
\label{app: add: experiments}
\begin{figure}[h]
    \centering
    \begin{subfigure}[t]{0.475\textwidth}
    \includegraphics[width=1\textwidth]{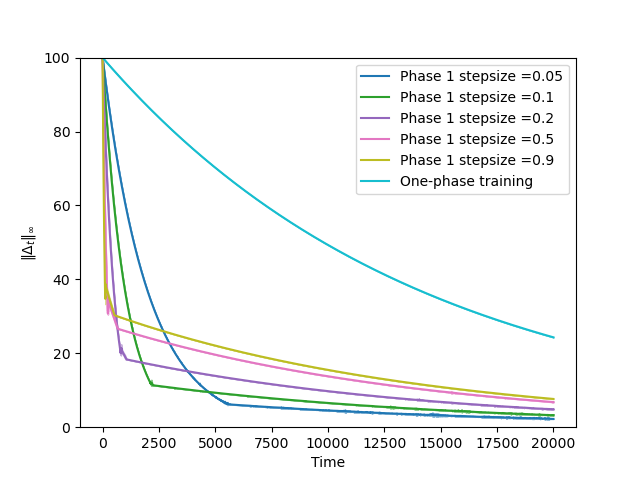}
    \caption{Phase 2 stepsize $\lambda=\frac{1}{\sqrt{T}}$} 
    \label{fig:two-phase a}
    \end{subfigure}
    \hfill
    \begin{subfigure}[t]{0.475\textwidth}
    \includegraphics[width=1\textwidth]{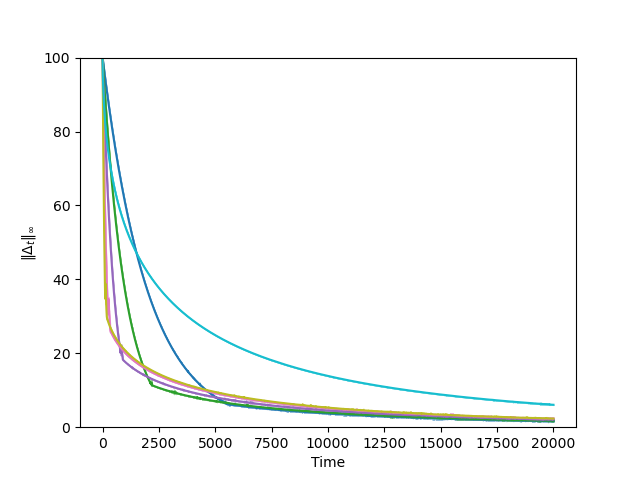}
    \caption{Phase 2 stepsize $\lambda_t=\frac{1}{\sqrt{t+1}}$} 
    \label{fig:two-phase b}
    \end{subfigure}
    \begin{subfigure}[t]{0.475\textwidth}
    \includegraphics[width=1\textwidth]{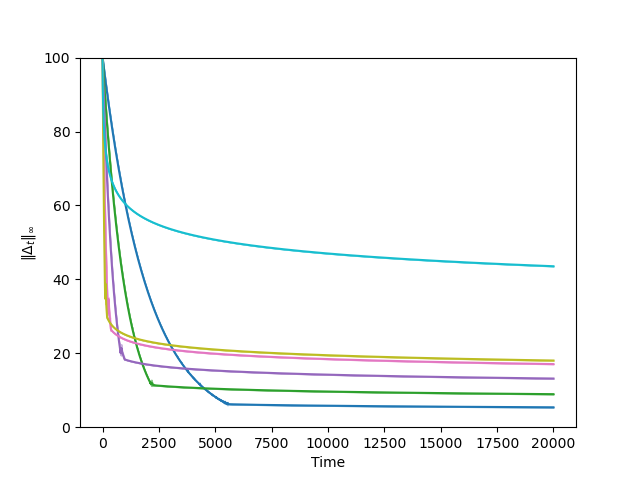}
    \caption{Phase 2 stepsize $\lambda_t=\frac{c+1}{t+c}$, where $c=10$} 
    \label{fig:two-phase c}
    \end{subfigure}
    \begin{subfigure}[t]{0.475\textwidth}
    \includegraphics[width=1\textwidth]{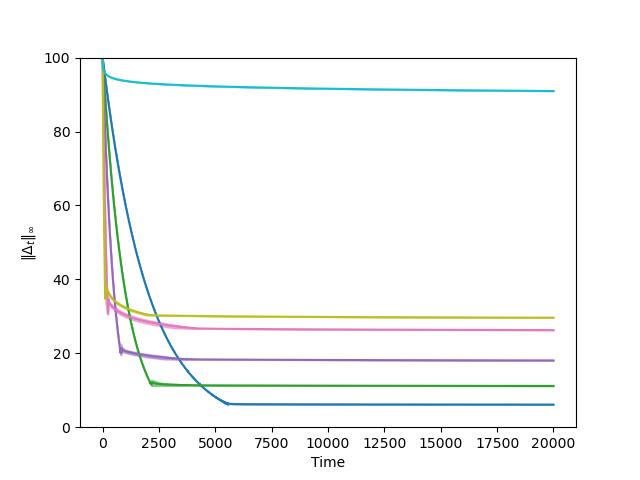}
    \caption{Phase 2 stepsize $\lambda_t=\frac{1}{t+1}$} 
    \label{fig:two-phase d}
    \end{subfigure}
    \begin{subfigure}[t]{0.475\textwidth}
    \includegraphics[width=1\textwidth]{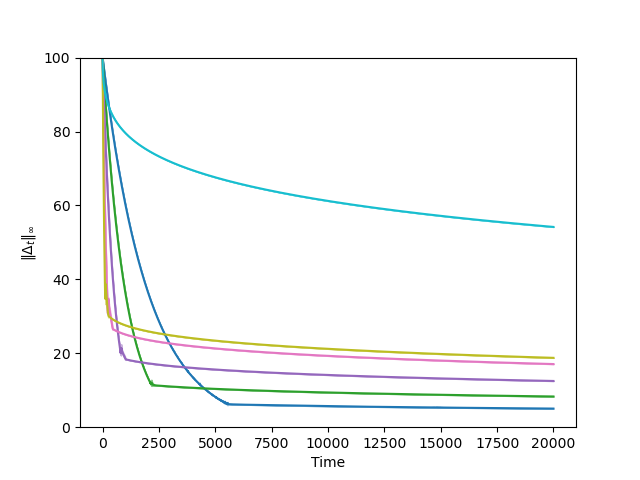}
    \caption{Phase 2 stepsize $\lambda_t=\frac{1}{(t+1)^{0.7}}$} 
    \label{fig:two-phase e}
    \end{subfigure}
    \caption{ Choosing different stepsizes for phases 1 and 2 leads to faster overall convergence. \(E\) = 10.}
    \label{fig:two-phase}
\end{figure}
\subsection{Impacts of $E$ on homogeneous settings.}
\label{app: exp: homo: E}
For the homogeneous settings, in addition to $E=10$, we also consider $E=\{1, 20, 40, \infty\}$, where $E=\infty$ means no communication among the agents throughout the entire learning process.  
Similar to Figure \ref{fig:two-phase-b}, there is no obvious two-phase phenomenon even in the extreme case when $E=\infty$. 
Also, though there is indeed performance degradation caused by larger $E$, the overall performance degradation is nearly negligible compared with the heterogeneous settings shown in Figures \ref{fig:two-phase-a} and \ref{fig:sync_int}. 
\begin{figure}[h]
\begin{subfigure}[t]{0.475\textwidth}
    \includegraphics[width=\textwidth]{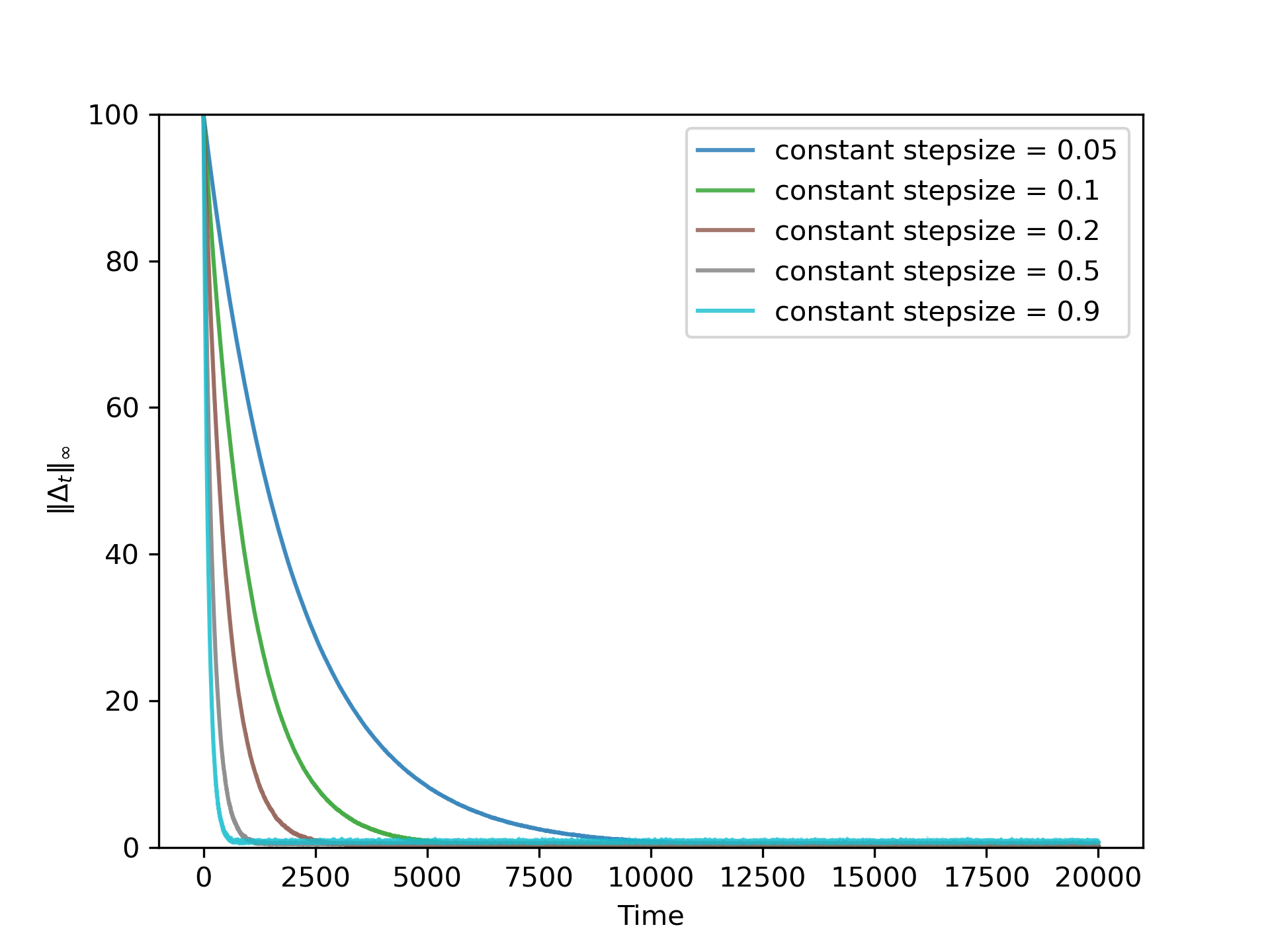}
    \caption{E=1}
    \label{fig-a}
  \end{subfigure}
  \hfill
  \begin{subfigure}[t]{0.475\textwidth}
    \includegraphics[width=\textwidth]{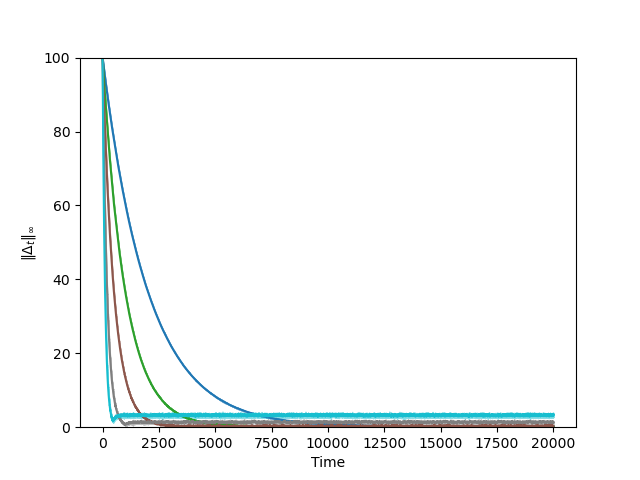}
    \caption{E=20}
    \label{fig-a}
  \end{subfigure}
  \hfill
  \begin{subfigure}[t]{0.475\textwidth}
    \includegraphics[width=\textwidth]{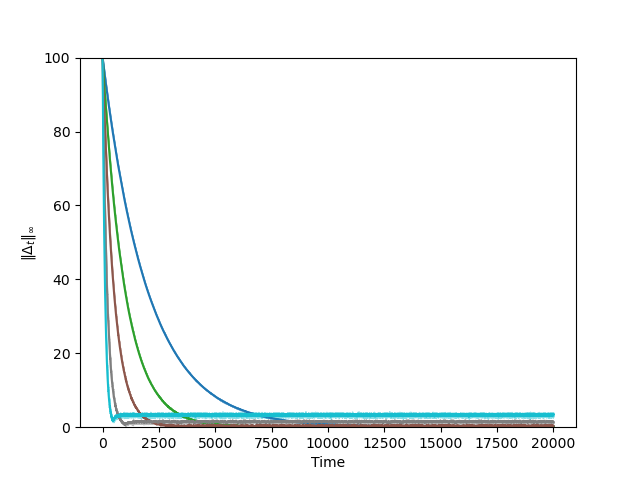}
    \caption{E=40}
    \label{fig-b}
  \end{subfigure}
    \hfill 
  \begin{subfigure}[t]{0.475\textwidth}
    \includegraphics[width=\textwidth]{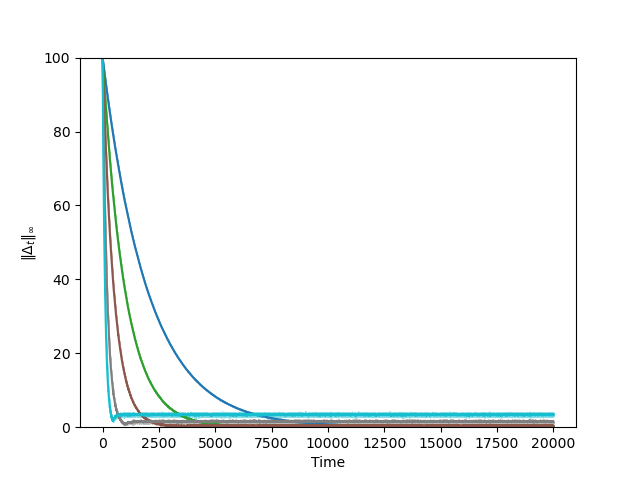}
    \caption{E=\(\infty\)}
    \label{fig-b}
  \end{subfigure}
  \caption{Homogeneous federated Q-learning with varying $E$.}
  \label{fig:homo_sync_int}
\end{figure}

\subsection{Impacts of $E$ on time-decaying stepsize}

\begin{figure}[H]
    \centering
    \includegraphics[width=0.6\linewidth]{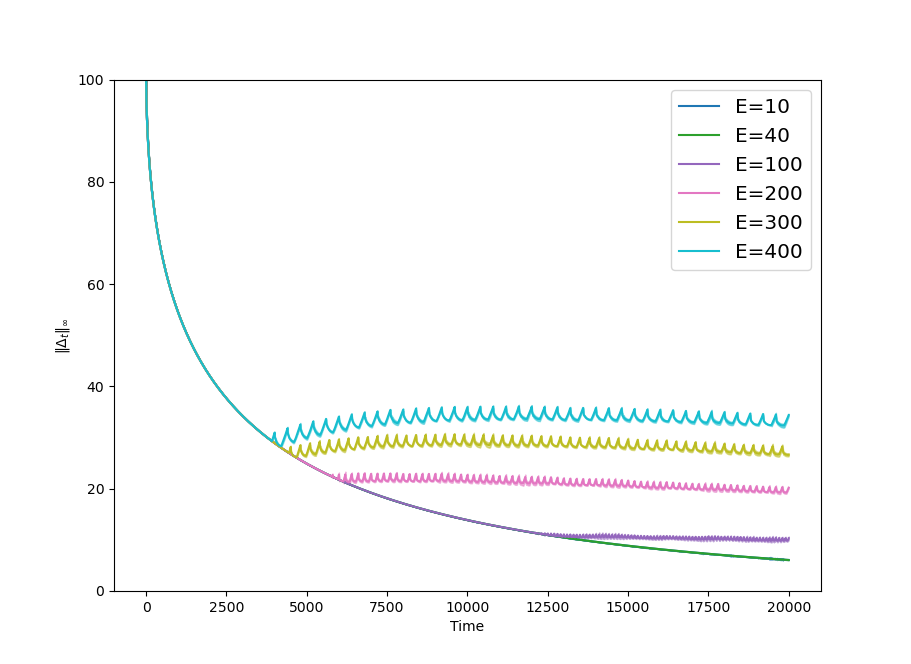}
    \caption{Using time-decaying stepsize $\lambda_t=\frac{1}{\sqrt{t+1}}$, the overall convergence becomes worse as $E$ increases}
    \label{fig:diffE}
\end{figure}

\begin{figure}[H]
    \centering
    \includegraphics[width=0.5\linewidth]{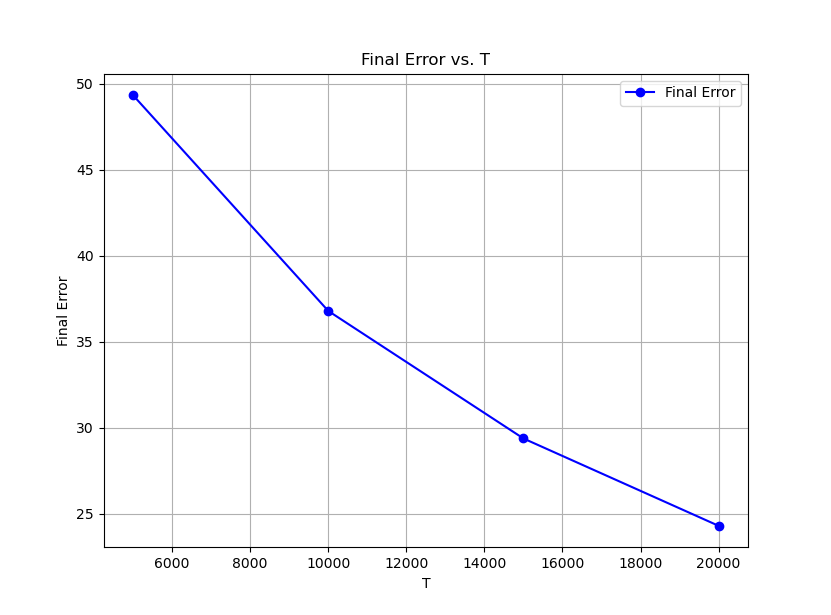}
    \caption{Final error versus T. It is clear that when choosing $\lambda=\frac{1}{\sqrt{T}}$, the final error decays as $T$ increases.}
    \label{fig:enter-label}
\end{figure}

\subsection{Different target error levels.}
\label{app: two-phase: stepsizes} 
\noindent In Figure \ref{fig:diff-tol}, we show the error levels that these training strategies can achieve within a time horizon $T=20,000$. 
The tolerance levels are $10\%, 5\%, 3\%,\text{ and } 1\%$ of the initial error $\linf{\Delta_0}$, respectively. 
At a high level, choosing different stepsizes for phases 1 and 2 can speed up convergence. 
 %
\begin{figure}
\begin{subfigure}[t]{0.475\textwidth}
  \vspace*{-0.3in}
    \includegraphics[width=\textwidth]{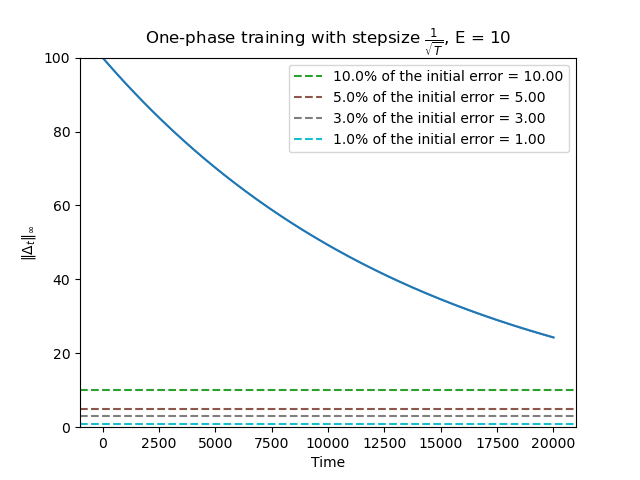}
    \caption{One common $\lambda = \frac{1}{\sqrt{T}}$ throughout. $\linf{\Delta_t}$ does meet any of the tolerance levels within 20000 iterations}
    \label{fig-a}
  \end{subfigure}
  \hfill
  \begin{subfigure}[t]{0.475\textwidth}
  \vspace*{-0.3in}
    \includegraphics[width=\textwidth]{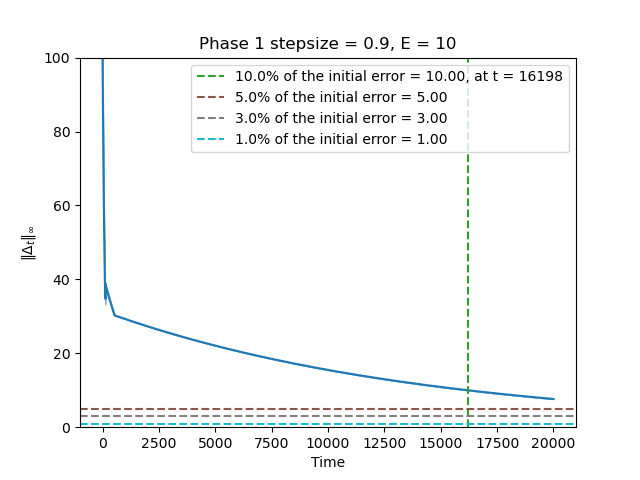}
    \caption{With a phase 1 stepsize of 0.9, it meets the 10\% tolerance level at iteration 16198.}
    \label{fig-b}
  \end{subfigure}

  \begin{subfigure}[t]{0.475\textwidth}
  \vspace*{-0.05in}
    \includegraphics[width=\textwidth]{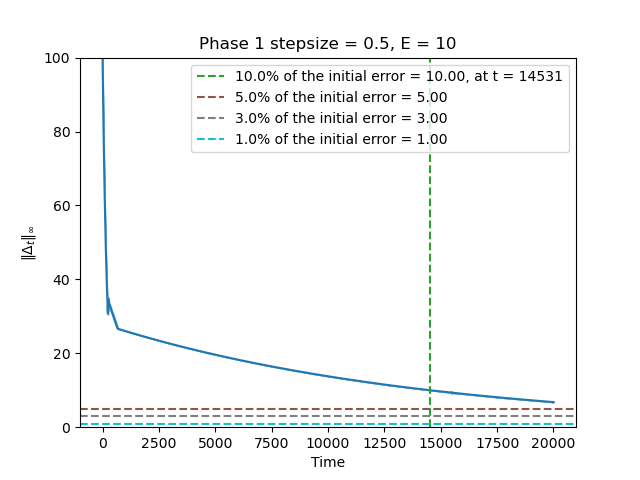}
    \caption{With a phase 1 stepsize of 0.5, it meets the 10\% tolerance level at iteration 14531.}
    \label{fig-c}
  \end{subfigure}
  \hfill
  \begin{subfigure}[t]{0.475\textwidth}
    \vspace*{-0.05in}
    \includegraphics[width=\textwidth]{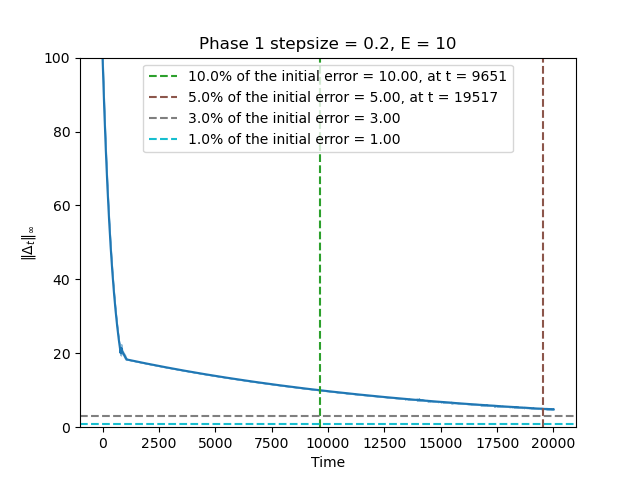}
    \caption{With a phase 1 stepsize of 0.2, it meets the 10\% and 5\% tolerance level at iterations 9651 and 19517, respectively.}
    \label{fig-d}
  \end{subfigure}

  \begin{subfigure}[t]{0.475\textwidth}
    \includegraphics[width=\textwidth]{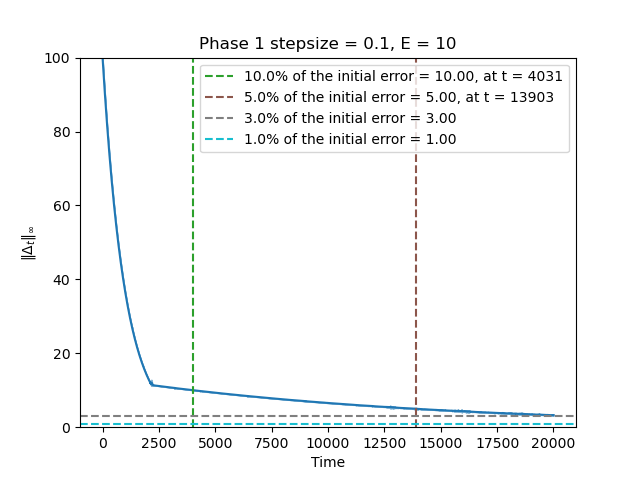}
    \caption{With a phase 1 stepsize of 0.1, it meets the 10\% and 5\% tolerance level at iterations 4031 and 13903, respectively.}
    \label{fig-e}
  \end{subfigure}
    \hfill
    \begin{subfigure}[t]{0.475\textwidth}
    \includegraphics[width=\textwidth]{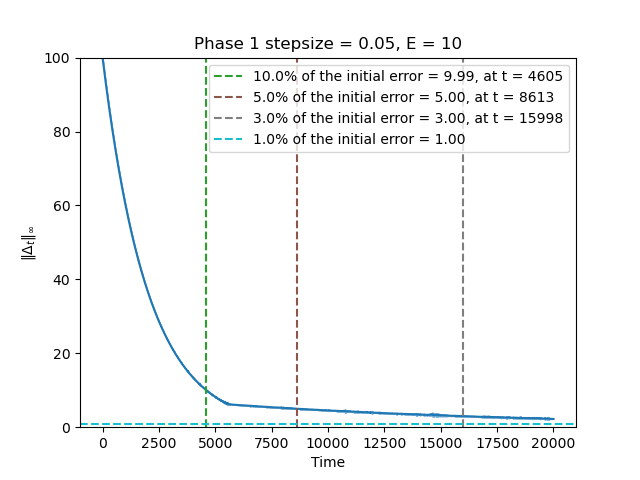}
    \caption{With a phase 1 stepsize of 0.05, it meets the 10\%, 5\%, and 3\% tolerance levels at iterations 4605, 8613, and 15998, respectively.}
    \label{fig-f}
  \end{subfigure}
  \caption{\footnotesize Convergence performance of different tolerance levels of different stepsize choices. The horizontal dashed lines represent the tolerance levels not met, while the vertical dashed lines indicate the iterations at which the training processes meet the corresponding tolerance levels.} 
  \label{fig:diff-tol}
\end{figure}

\end{document}